\def\eqref#1{equation~\ref{#1}}
\def\1{\bm{1}}
\def\rvv{{\mathbf{v}}}
\def\rvx{{\mathbf{x}}}
\def\rmX{{\mathbf{X}}}
\def\rmY{{\mathbf{Y}}}
\DeclareMathAlphabet{\mathsfit}{\encodingdefault}{\sfdefault}{m}{sl}
\SetMathAlphabet{\mathsfit}{bold}{\encodingdefault}{\sfdefault}{bx}{n}
\newcommand{\cmark}{\ding{51}}%
\newcommand{\xmark}{\ding{55}}%
\definecolor{grey}{gray}{0.4} 
\newtheorem{theorem}{Theorem} 
\title{TS-LIF: A Temporal Segment Spiking Neuron Network for Time Series Forecasting}
\author{%
  Shibo Feng$^{1,4}$\thanks{Equal contribution.},
  Wanjin Feng$^{2*}$,
  Xingyu Gao$^{2}$,
  Peilin Zhao$^3$\thanks{Corresponding to: Peilin Zhao (masonzhao@tencent.com), and Zhiqi Shen (zqshen@ntu.edu.sg)},
  Zhiqi Shen$^1$\footnotemark[2]\\
  \texttt{shibo001@ntu.edu.sg};~
  \texttt{\{fengwanjin,gaoxingyu\}@ime.ac.cn}\\
  Nanyang Technological University$^1$ ~ University of Chinese Academy of Sciences$^2$ \\
  Tencent AI Lab$^3$ ~ Webank-NTU Joint Research Institute on Fintech, NTU, Singapore$^4$ ~ \\
}
\begin{document}

\maketitle

\begin{abstract}

Spiking Neural Networks (SNNs) offer a promising, biologically inspired approach for processing spatiotemporal data, particularly for time series forecasting. However, conventional neuron models like the Leaky Integrate-and-Fire (LIF) struggle to capture long-term dependencies and effectively process multi-scale temporal dynamics.
To overcome these limitations, we introduce the Temporal Segment Leaky Integrate-and-Fire (TS-LIF) model, featuring a novel dual-compartment architecture.
The dendritic and somatic compartments specialize in capturing distinct frequency components, providing functional heterogeneity that enhances the neuron's ability to process both low- and high-frequency information.
Furthermore, the newly introduced direct somatic current injection reduces information loss during intra-neuronal transmission, while dendritic spike generation improves multi-scale information extraction.
We provide a theoretical stability analysis of the TS-LIF model and explain how each compartment contributes to distinct frequency response characteristics.
Experimental results show that TS-LIF outperforms traditional SNNs in time series forecasting, demonstrating better accuracy and robustness, even with missing data.
TS-LIF advances the application of SNNs in time-series forecasting, providing a biologically inspired approach that captures complex temporal dynamics and offers potential for practical implementation in diverse forecasting scenarios. The source code is available at \url{https://github.com/kkking-kk/TS-LIF}.
\end{abstract}

\section{Introduction}

Spiking Neural Networks (SNNs) have garnered significant attention due to their biological plausibility and unique capacity to process spatiotemporal information \citep{hu2024-highperformance}. 
Unlike traditional artificial neural networks (ANNs), which rely on continuous activations, SNNs utilize discrete spikes as their primary communication mechanism \citep{wang2024-braininspired}. 
This event-driven nature allows SNNs to operate efficiently, only processing information when necessary, making them highly suited for tasks involving sparse, time-dependent data \citep{gast2024-neural}. 
By encoding information through the precise timing of spikes, SNNs achieve fine temporal resolution, providing a significant advantage in applications requiring both temporal and spatial accuracy \citep{zhu2024-tcjasnn}. 
Moreover, the asynchronous processing of SNNs closely mimics biological neurons, enabling energy-efficient computation \citep{ganguly2024-spike, bellec2020-solution}. 

One domain that aligns naturally with SNNs is time series forecasting, which involves predicting future values based on historical observations.
It is critical in various domains, including finance, weather prediction, healthcare, and energy monitoring \citep{lin2024-sparsetsf, ilbert2024-samformer, angelopoulos2024-conformal}. 
The sequential nature of time series data, characterized by time dependencies, aligns well with SNNs’ temporal processing abilities.
Traditional deep learning models, such as Temporal Convolutional Networks (TCNs), and Transformer, have been effective in capturing long-term dependencies \citep{luo2024-moderntcn, wu2021-autoformer, ilbert2024-samformera}.
However, these models typically require significant computational resources to manage complex temporal relationships \citep{liu2023-scaleteaching, feng2025hdt}. 
In contrast, SNNs, with their event-driven and sparse computational architecture, can offer a more efficient solution, particularly for applications that involve sparse temporal events and demand low energy consumption \citep{lv2024-efficient}.

Despite the potential benefits, applying SNNs to time series forecasting has been limited. 
A notable exception is the work by \citet{lv2024-efficient}, which demonstrated that SNNs could achieve competitive results in this domain, particularly in terms of efficiency when implemented on neuromorphic hardware.
However, broader adoption remains constrained by the limitations of the widely used Leaky Integrate-and-Fire (LIF) neuron model.
While the LIF neuron is biologically plausible and computationally efficient, its rapid membrane potential decay impairs its ability to capture long-term dependencies \citep{wang2024-autaptic}. 
Furthermore, it struggles to process multi-timescale information, which is crucial for understanding both short-term fluctuations and long-term trends \citep{zheng2024-temporal}.
These challenges restrict the effectiveness of LIF-based SNNs in complex forecasting tasks, where accurate predictions require capturing patterns across multiple temporal scales.


To address these limitations, we propose the Temporal Segment LIF (TS-LIF) Neuron Model, specifically designed for time series forecasting.
TS-LIF incorporates a dual-compartment mechanism to process information across different timescales. 
This model extends the standard LIF neuron by introducing dendritic and somatic compartments, each responsible for capturing distinct frequency components of the input signal.
Furthermore, the addition of direct somatic current injection mitigates information loss during intra-neuronal transmission, while dendritic spike generation improves the neuron's capacity for extracting information across multiple scales.
We establish the stability conditions for this dual-compartment model and derive the frequency-domain transfer functions for both the dendritic and somatic compartments.
The TS-LIF model was evaluated on four benchmark datasets using CNN, RNN, and Transformer architectures, consistently outperforming previous LIF-based models.
Additionally, TS-LIF demonstrates a significant advantage in maintaining accuracy under scenarios of missing inputs.
Finally, through ablation studies, we show how TS-LIF achieves superior performance by effectively decomposing input signals into different frequency components.
In summary, our contributions include:
\begin{itemize} 
    \item We propose the Temporal Segment LIF (TS-LIF) model, a dual-compartment neuron with dendritic and somatic branches that effectively captures multi-scale temporal features, enhancing time series forecasting.
    \item We establish stability conditions for TS-LIF, ensuring robustness, and derive frequency-domain transfer functions that illustrate the distinct contributions of dendritic and somatic compartments to temporal processing.
    \item  We validate TS-LIF on four benchmark datasets using CNN, RNN, and Transformer architectures, demonstrating consistent improvements over LIF-based SNNs, superior accuracy, and robustness to missing inputs.
\end{itemize}

\section{Related Work}

\subsection{Modeling Long-Term Dependencies in SNNs}
Early SNN research using the LIF neuron model focused on simulating neuronal dynamics but struggled with long-term dependencies, restricting its effectiveness in memory-intensive tasks.\citep{wang2023-complex}. To address this challenge, several advanced neuron models have been proposed.

The Gated Leaky Integrate-and-Fire (GLIF) model introduced a gating mechanism to regulate temporal information flow, improving long-term sequence modeling while maintaining energy efficiency \citep{yao2022-glif}.
Similarly, \citet{wang2024-autaptic} explored autaptic connections to enhance long-term dependency processing.
Dual-compartment models have also shown promise in improving memory retention. 
The Two-Compartment LIF (TC-LIF) model divides memory between dendritic and somatic compartments, enhancing gradient propagation and improving long-sequence retention \citep{zhang2024-tclif}. 
Building on this, the Learnable Multi-hierarchical (LM-H) neuron model introduced learnable parameters to dynamically balance historical and current information \citep{hao2024-progressive}. 
Additionally, \citet{zheng2024-temporal} proposed a multi-compartment neuron model with temporal dendritic heterogeneity, enabling neurons to process different time-scale inputs.

Despite these advances, current models still fall short in effectively decomposing and integrating features from different time scales within a single input signal, leaving room for further exploration in this area.

\subsection{Time Series Forecasting}

Time series refers to a sequence of data points recorded over time intervals, which is crucial for understanding temporal dynamics in various fields \citep{wu2020adversarial, feng2024latent, woo2024unified}. 
While recent advancements in model architectures have improved forecasting accuracy, balancing performance with computational efficiency remains a challenge.

Traditional models like Recurrent Neural Networks (RNNs), including LSTMs and GRUs, are widely used for sequential data but often struggle with long-term dependencies and inefficiencies on large datasets \citep{ilhan2021-markovian, dera2023-trustworthy}. 
Temporal Convolutional Networks (TCNs) offer a more scalable alternative by capturing long-range dependencies through dilated convolutions, allowing for parallel processing of sequences \citep{lea2017-temporal, luo2024-moderntcn}. 
Transformers, initially designed for natural language processing, have also been adapted for time series forecasting and generation \citep{ilbert2024-samformer, zhang2023-crossformer, zhicheng2024sdformer}. 
Their self-attention mechanism models long-range dependencies effectively, while variants like Informer and Autoformer use sparse attention and decomposition techniques to reduce computational demands \citep{zhou2021-informer, wu2021-autoformer}.

However, the computational requirements of TCNs and Transformers, particularly regarding energy consumption, remain substantial.
Implementing these models in resource-constrained environments is still challenging, even with optimizations such as sparse attention and hybrid approaches.

\begin{figure}[tbp]
    \centering
    \includegraphics[width=0.80\textwidth]{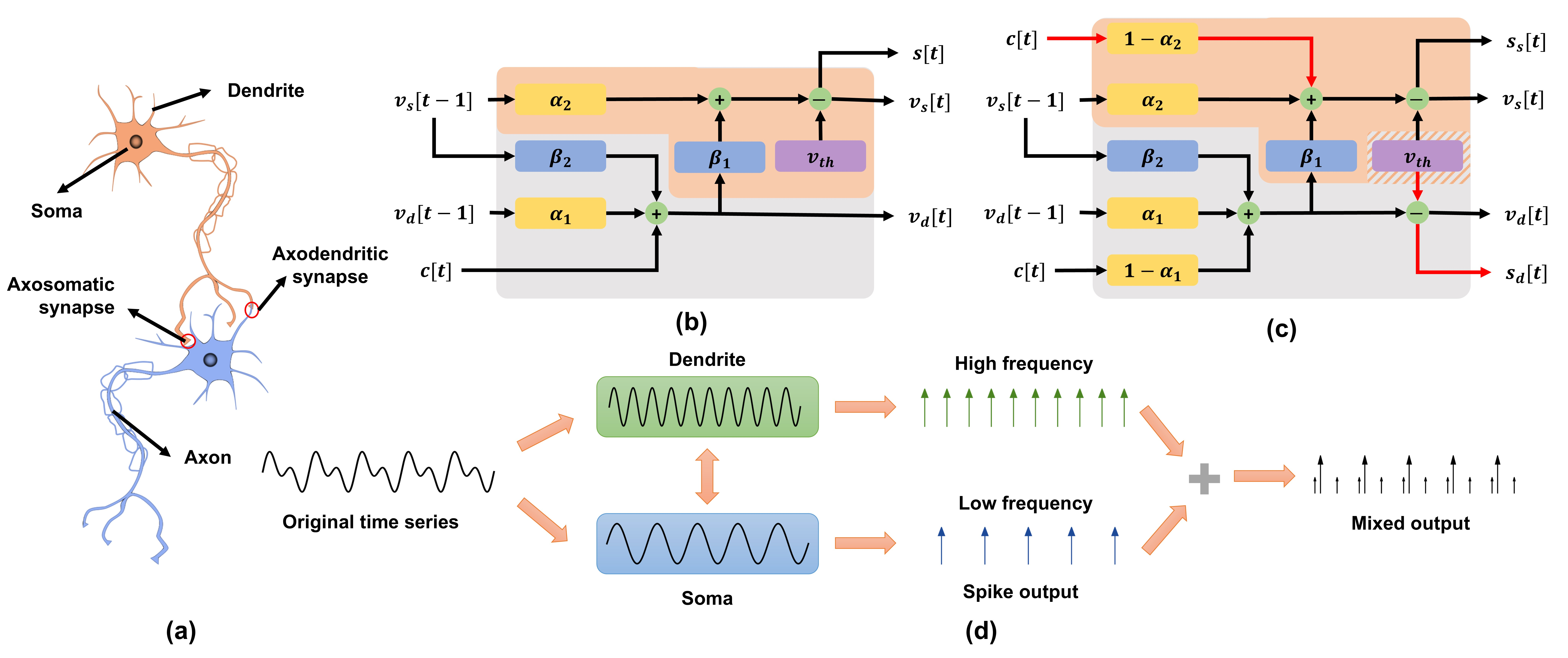}
    \caption{
        Diagram of Neuronal Signal Processing and Integration:
        (a) Structural organization of neuronal signal transmission, highlighting axosomatic and axodendritic synapses.
        (b) A generalized two-compartment spiking neuron model, applicable to TC-LIF or LM-H models, with dendritic (gray) and somatic (orange) compartments.
        (c) Proposed TS-LIF model with the newly introduced direct somatic current injection and dendritic spike generation (highlighted in red).
        (d) Time series decomposition and spike output generation in the TS-LIF model.
    }
    \label{fig:neuronal_framework}
\end{figure}

\section{Preliminaries}

\subsection{Time Series Problem Setting}

We consider the task of multivariate time series forecasting, where the observations are represented as a sequence $\rmX = \{\rvx_1, \rvx_2, \dots, \rvx_T\} \in \mathbb{R}^{T \times C}$.
Here, $T$ represents the number of time steps, and $C$ denotes the number of variables.
The goal is to learn a predictive function $f$ that generates future values $\rmY = \{\rvx_{T+1}, \rvx_{T+2}, \dots, \rvx_{T+L}\} \in \mathbb{R}^{L \times C}$ for the next $L$ time steps. 

To achieve this, time series decomposition can be utilized to reveal features across different temporal scales, such as short-term variations and long-term trends.
These features play a crucial role in developing models that can handle both rapid fluctuations and slower patterns in the data. 
By modeling these components individually or jointly, the predictive function $f$ can better capture the underlying structure of the time series, thereby enhancing forecasting accuracy.

\subsection{LIF Neuron Model}

The Leaky Integrate-and-Fire (LIF) neuron is widely used in SNNs due to its computational simplicity and biological relevance. 
The evolution of the membrane potential in the discrete-time domain is expressed as:
\begin{equation}
    v[t] = \alpha v[t-1] + c[t] - v_{\text{th}} s[t-1],
\end{equation}
where $\alpha < 1$ is the decay factor, $v[t]$ represents the membrane potential at time step $t$, and $c[t]$ is the input current.
The term $v_{\text{th}} s[t-1]$ accounts for resetting the membrane potential after a spike. 
The spike output is determined by the Heaviside step function $H(\cdot)$, given by:
\begin{equation}
    s[t] = H(v[t] - v_{\text{th}}),
\end{equation}
where $H(x)$ outputs 1 if $x \geq 0$ and 0 otherwise. 
This function indicates whether the membrane potential has exceeded the threshold $v_{\text{th}}$, thereby triggering a spike.

Assuming the initial membrane potential $v[0] = 0$, the evolution of the membrane potential simplifies to:
\begin{equation}
    v[t+1] = \sum_{k=1}^{t+1} \alpha^{t-k+1} c[k] - v_{\text{th}} \sum_{k=1}^{t} \alpha^{t-k} s[k].
\end{equation}
This equation shows that $v[t+1]$ is a weighted sum of past input currents $c[k]$ and spike output $s[k]$, with older values decaying exponentially \citep{wang2024-autaptic}. 
The LIF neuron integrates inputs over a short window, acting like a low-pass filter. 
Consequently, it primarily responds to recent inputs, making it less effective in capturing long-term dependencies, which limits its use in tasks that require extended memory and complex spatiotemporal processing.

\subsection{Dendrites and Soma}

Biological neurons are often represented with a dual-compartment architecture, comprising dendrites and soma, each handling specific signal processing functions, as illustrated in Figure \ref{fig:neuronal_framework}(a). 
Dendrites receive synaptic inputs and integrate them over time, while the soma acts as the central decision-making unit, determining whether to generate a spike based on the accumulated inputs, as shown in Figure \ref{fig:neuronal_framework}(b) \citep{zhang2024-tclif, hao2024-progressive}. 
The dynamics of this system are mathematically described as:
\begin{equation}\label{eq:dual-compartment}
\left\{
\begin{aligned}
    v_d[t] &= \alpha_1 v_d[t-1] + \beta_1 v_s[t-1] + c[t], \\
    v_s[t] &= \alpha_2 v_s[t-1] + \beta_2 v_d[t] - v_{\text{th}} s_s[t-1], \\
    s_s[t] &= H(v_s[t] - v_{\text{th}}).
\end{aligned}
\right.
\end{equation}
Here, $v_d[t]$ and $v_s[t]$ represent the membrane potentials at the dendrites and soma, respectively. $\alpha_1$ and $\alpha_2$ are decay factors that modulate the influence of previous membrane potentials, while $\beta_1$ and $\beta_2$ signify cross-compartmental interactions. The soma generates a spike when its membrane potential $v_s[t]$ exceeds the threshold $v_{\text{th}}$.

By making minor adjustments to Equation (\ref{eq:dual-compartment}), the TC-LIF and LM-H neuron models can be derived.
These models can, to some extent, mitigate the problem of vanishing gradients by appropriately tuning the decay factors $\alpha$ and cross-compartmental interactions $\beta$, enabling them to better model long-term dynamic relationships.
However, these models lack a clear distinction in processing capabilities across different temporal scales, such as high and low frequencies, between the dendritic and somatic compartments.
Moreover, there is no theoretical validation of their robustness or temporal processing capabilities, limiting their effectiveness in scenarios requiring explicit multi-timescale feature extraction and reliable long-term memory retention.


\section{Methodology}

\subsection{Temporal Segment LIF Neuron}

In the dual-compartment model described by Equation (\ref{eq:dual-compartment}), the input current $c[t]$ flows through the dendrites before reaching the soma, which can result in information loss during transmission.
For example, if the dendritic compartment $v_d$ primarily captures the low-frequency features of $c[t]$, it becomes challenging for the somatic compartment $v_s$ to recover the original high-frequency components without direct current input.
This limitation highlights the model's difficulty in effectively handling multi-scale information simultaneously.

Biological neurons contain numerous synapses that function as independent pattern detectors (including identity mapping), ensuring soma receive sufficient information \citep{hawkins2016-why}. 
However, describing the complete neural dynamics using simple formulas is impractical.
To address this, we propose the TS-LIF neuron model, which incorporates shortcut mechanisms. 
These mechanisms can be viewed as dendritic pathways performing identity mapping or as more direct connections, such as axosomatic synapses (Figure \ref{fig:neuronal_framework}(a)) and electrical synapses (gap junctions) \citep{freal2023-sodium, tewari2024-astrocytes, farsang2024-learning}.
The dynamics of this model are defined as:
\begin{equation}
\left\{
\begin{aligned}
    v_d[t] &= \alpha_1 v_d[t-1] + \beta_1 v_s[t-1] + {(1 - \alpha_1) c[t] - \gamma_1 s_d[t-1]}, \\
    v_s[t] &= \alpha_2 v_s[t-1] + \beta_2 v_d[t] + {(1 - \alpha_2) c[t] - \gamma_2 s_s[t-1]}, \\
    s_d[t] &= H(v_d[t] - v_{\text{th}}), \\
    s_s[t] &= H(v_s[t] - v_{\text{th}}).
\end{aligned}
\right.
\end{equation}
In this model, the shortcut is implemented through the term $(1 - \alpha_2) c[t]$, allowing direct current input to the soma as shown in Figure \ref{fig:neuronal_framework}. 

When $\alpha_1$ is close to 1, the dendritic membrane potential $v_d[t]$ acts like a moving average, capturing long-term features by focusing on low-frequency components. 
Conversely, when $\alpha_2$ is close to 0, the somatic potential $v_s[t]$ rapidly adapts to changes in $c[t]$, making the neuron highly responsive to rapid fluctuations. 
This dual-compartment model can therefore effectively handle both short-term and long-term signals, providing a balanced approach to temporal processing.

Both dendritic ($s_d$) and somatic ($s_s$) compartments can generate spikes, similar to biological neurons like hippocampal and cortical pyramidal neurons \citep{muller2023-mechanism, hayashi-takagi2023-fire, narayanan2024-mirnamediated}. 
The simultaneous firing of multiple types of spikes is also observed in multi-compartment neurons and Hierarchical Temporal Memory (HTM) neurons \citep{payeur2021-burstdependent, capone2023-spiking, hawkins2016-why}.
$\gamma_1$ and $\gamma_2$ represent adaptive reset mechanisms rather than fixed voltage resets, providing greater flexibility in the neuron's response.
Furthermore, the dendritic and somatic compartments can be modeled as two distinct LIF neurons with unique characteristics, interacting and collaborating to process signals across different frequencies, such as in the visual and auditory systems \citep{dallos1972-cochlear, wang2011-conespecific}.
The intensity of their interaction is modulated by the parameter $\beta$ which governs the strength of their mutual influence.

To leverage multi-scale information, we combine dendritic and somatic spike outputs through a weighted sum:
\begin{equation}
    s_{\text{mix}}[t] = \kappa s_d[t] + (1-\kappa) s_s[t],
\end{equation}
where $\kappa$ controls the balance between dendritic and somatic contributions. 
This coefficient can adapt across different feature channels, allowing flexible integration of both low- and high-frequency components, thereby enabling the model to capture richer temporal features and produce more robust representations of input signals.
Here, all the coefficients ($\alpha, \beta, \gamma, \kappa$) mentioned above are learnable.

\subsection{Stability Analysis}

To ensure the robustness of the proposed TS-LIF model, we perform a stability analysis by focusing on the homogeneous part of the system \citep{chen1984-linear}. 
The goal is to determine the conditions under which the system remains stable, meaning all solutions remain bounded over time.

\begin{theorem}
The system governed by the following dynamics:
\begin{equation}
\left\{
\begin{aligned}
    v_d[t] &= \alpha_1 v_d[t-1] + \beta_1 v_s[t-1], \\
    v_s[t] &= \alpha_2 v_s[t-1] + \beta_2 v_d[t],
\end{aligned}
\right.
\end{equation}
has eigenvalues:
\begin{equation}
    \lambda = \frac{\alpha_1 + \alpha_2 + \beta_1 \beta_2 \pm \sqrt{(\alpha_1 + \alpha_2 + \beta_1 \beta_2)^2 - 4 \alpha_1 \alpha_2}}{2}.
\end{equation}
For the system to remain stable, it is necessary that $|\lambda| < 1$ for both eigenvalues.
\end{theorem}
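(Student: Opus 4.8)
The plan is to reduce the coupled recurrence to a single-step linear map, read off its characteristic polynomial, and then invoke the standard spectral criterion for discrete-time stability.

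First I would put the system into proper state-space form. The one subtlety is that the update for $v_s[t]$ references $v_d[t]$ at the \emph{current} step rather than the previous one, so the pair $(v_d,v_s)$ does not yet evolve by a clean first-order recurrence. To fix this, substitute the first equation $v_d[t] = \alpha_1 v_d[t-1] + \beta_1 v_s[t-1]$ into the second, eliminating $v_d[t]$. This yields
\[
\begin{pmatrix} v_d[t] \\ v_s[t] \end{pmatrix} = A \begin{pmatrix} v_d[t-1] \\ v_s[t-1] \end{pmatrix}, \qquad A = \begin{pmatrix} \alpha_1 & \beta_1 \\ \alpha_1\beta_2 & \alpha_2 + \beta_1\beta_2 \end{pmatrix}.
\]
Getting the off-diagonal entry $\alpha_1\beta_2$ correct (it arises from $\beta_2$ multiplying the $v_d[t-1]$ term of the substituted expression) is the one place where an error could creep in; everything afterward is mechanical.

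Second, I would compute the characteristic polynomial of $A$. Using $\det(A - \lambda I) = \lambda^2 - (\Tr A)\lambda + \det A$, a short computation gives $\Tr A = \alpha_1 + \alpha_2 + \beta_1\beta_2$ and, after the cross terms $\alpha_1\beta_1\beta_2$ cancel, $\det A = \alpha_1\alpha_2$. Applying the quadratic formula to $\lambda^2 - (\alpha_1+\alpha_2+\beta_1\beta_2)\lambda + \alpha_1\alpha_2 = 0$ then reproduces exactly the eigenvalue expression in the statement.

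Finally, for the stability claim I would appeal to the classical theory of linear difference equations. Since the homogeneous solution is $x[t] = A^t x[0]$, passing to the Jordan form of $A$ shows that $A^t$ is a combination of terms $\lambda_i^t$, multiplied by polynomial factors in $t$ when $A$ is defective. Such terms stay bounded for arbitrary $x[0]$ exactly when every $|\lambda_i| \le 1$ (with equality allowed only at semisimple eigenvalues), and they decay to zero — i.e.\ the system is asymptotically stable — precisely when the spectral radius $\rho(A) = \max_i |\lambda_i| < 1$. This delivers the necessary condition $|\lambda| < 1$ for both eigenvalues. I expect no real obstacle here; the only care needed is to phrase the criterion at the level of the spectral radius rather than assuming a generic $2\times 2$ matrix is diagonalizable. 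If one wanted an explicit parameter condition, $\rho(A) < 1$ could be translated via the Jury/Schur--Cohn inequalities ($|\det A| < 1$ and $|\Tr A| < 1 + \det A$, i.e.\ $|\alpha_1\alpha_2| < 1$ and $|\alpha_1 + \alpha_2 + \beta_1\beta_2| < 1 + \alpha_1\alpha_2$), but that goes beyond what the statement asks.
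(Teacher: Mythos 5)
Your proposal follows essentially the same route as the paper: eliminate $v_d[t]$ from the second update to obtain the state matrix $A = \begin{bmatrix} \alpha_1 & \beta_1 \\ \alpha_1\beta_2 & \alpha_2+\beta_1\beta_2 \end{bmatrix}$, read off the characteristic polynomial $\lambda^2 - (\alpha_1+\alpha_2+\beta_1\beta_2)\lambda + \alpha_1\alpha_2 = 0$, and apply the quadratic formula together with the spectral-radius criterion for boundedness. Your treatment is if anything slightly more careful than the paper's (noting the defective case via Jordan form and the Jury conditions), but the argument is the same.
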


\begin{proof}
We start by representing the system dynamics in matrix form. 
The system can be expressed as:
\begin{equation}
    \mathbf{v}[t] = A \mathbf{v}[t-1],
\end{equation}
where the state vector is $\mathbf{v}[t] = \begin{bmatrix} v_d[t] \\ v_s[t] \end{bmatrix}$, and the system matrix is:
\begin{equation}
    A = \begin{bmatrix} \alpha_1 & \beta_1 \\ \alpha_1\beta_2 & \alpha_2+\beta_1\beta_2 \end{bmatrix}.
\end{equation}
To determine the eigenvalues of the system, we solve the characteristic equation:
\begin{equation}
    \text{det}(A - \lambda I) = 0,
\end{equation}
which results in the quadratic equation:
\begin{equation}
    \lambda^2 - (\alpha_1 + \alpha_2 + \beta_1 \beta_2)\lambda + \alpha_1 \alpha_2 = 0.
\end{equation}
Solving this quadratic equation gives the eigenvalues:
\begin{equation}
    \lambda = \frac{\alpha_1 + \alpha_2 + \beta_1 \beta_2 \pm \sqrt{(\alpha_1 + \alpha_2 + \beta_1 \beta_2)^2 - 4 \alpha_1 \alpha_2}}{2}.
\end{equation}
For stability, both eigenvalues must satisfy $|\lambda| < 1$, ensuring they lie within the unit circle, thereby guaranteeing the boundedness of the system over time.
\end{proof}

The stability of the system is governed by the eigenvalues of matrix $A$. 
Stability is achieved when both eigenvalues lie within the unit circle, which occurs only if the sum $\alpha_1 + \alpha_2 + \beta_1 \beta_2$ is less than 2. 
This explains why the TC-LIF model, with $\alpha_1 = \alpha_2 = 1$, requires $\beta_1 \beta_2 \leq 0$ to ensure stability \citep{zhang2024-tclif}. 
Properly balancing the interactions between the dendritic and somatic compartments is crucial to maintaining stability.

\subsection{Frequency Response Analysis}

To examine the frequency characteristics of TS-LIF, we derive the transfer functions by applying the \( Z \)-transform to the system equations, while ignoring the effects of spike generation and reset mechanisms for simplicity \citep{chen1984-linear}.
The discrete-time system governing the dynamics of the dendritic and somatic potentials is described by:
\begin{equation}
\left\{
\begin{aligned}
    v_d[t] &= \alpha_1 v_d[t-1] + \beta_1 v_s[t-1] + (1 - \alpha_1) c[t], \\
    v_s[t] &= \alpha_2 v_s[t-1] + \beta_2 v_d[t] + (1 - \alpha_2) c[t].
\end{aligned}
\right.
\end{equation}

Applying the \( Z \)-transform under zero initial conditions yields:
\begin{equation}
    \left\{
\begin{aligned}
    (1 - \alpha_1 z^{-1}) v_d(z) - \beta_1 z^{-1} v_s(z) &= (1 - \alpha_1) c(z), \\
    -\beta_2 v_d(z) + (1 - \alpha_2 z^{-1}) v_s(z) &= (1 - \alpha_2) c(z).
\end{aligned}
\right.
\end{equation}

The system can then be expressed as:
\begin{equation}
    \mathbf{M} \mathbf{v}(z) = \mathbf{\Tilde{c}}(z).
\end{equation}

where
\[
\mathbf{M} = \begin{bmatrix}
1 - \alpha_1 z^{-1} & -\beta_1 z^{-1} \\
-\beta_2 & 1 - \alpha_2 z^{-1}
\end{bmatrix}, \quad
\rvv (z) = \begin{bmatrix} v_d(z) \\ v_s(z) \end{bmatrix}, \quad
\mathbf{\Tilde{c}}(z) = \begin{bmatrix} (1 - \alpha_1) c(z) \\ (1 - \alpha_2) c(z) \end{bmatrix}.
\]

The transfer functions for \( v_d[t] \) and \( v_s[t] \) with respect to the input \( c[t] \) are:
\begin{equation}
\begin{aligned}
H_d(z) &=\frac{v_d(z)}{c(z)} = \dfrac{(1 - \alpha_1)(1 - \alpha_2 z^{-1}) + \beta_1 z^{-1} (1 - \alpha_2)}{\det(\mathbf{M})}, \\
H_s(z) &= \frac{v_s(z)}{c(z)} =  \dfrac{\beta_2 (1 - \alpha_1) + (1 - \alpha_1 z^{-1})(1 - \alpha_2)}{\det(\mathbf{M})},
\end{aligned}
\end{equation}
where the determinant of \( \mathbf{M} \) is:
\begin{equation}
\det(\mathbf{M}) = 1 - (\alpha_1 + \alpha_2 + \beta_1 \beta_2) z^{-1} + \alpha_1 \alpha_2 z^{-2}.
\end{equation}

These transfer functions offer insights into how the dendritic and somatic compartments process different frequency components of the input signal. 
The magnitudes of $H_d(z)$ and $H_s(z)$ represent the system's gain at specific frequencies, while their phases indicate the delay or shift introduced by the system.


\subsection{Example: Frequency Separation in Practice}

To illustrate the system's behavior, we use the following parameter settings: \( \alpha_1 = 0.95 \), which enables low-pass filtering in \( v_d[t] \), \( \alpha_2 = 0.05 \), providing high-pass filtering for \( v_s[t] \), along with \( \beta_1 = 0 \) and \( \beta_2 = -0.9 \).
Under these conditions, the characteristic equation for the system is derived as:
\begin{equation}
1 - \lambda + 0.0475 \lambda^2 = 0,
\end{equation}

Solving this, we find the eigenvalues \( \lambda_1 = 0.95 \) and \( \lambda_2 = 0.05 \), both of which lie within the unit circle, ensuring system stability. The corresponding transfer functions for the dendritic and somatic compartments are:

\begin{equation}
H_d(z) = \frac{0.05 - 0.0025 z^{-1}}{1 - z^{-1} + 0.0475 z^{-2}},
\end{equation}

\begin{equation}
H_s(z) = \frac{0.905 - 0.9025 z^{-1}}{1 - z^{-1} + 0.0475 z^{-2}}.
\end{equation}

At low frequencies (\( \omega = 0 \)), \( H_d(e^{j\omega}) = 1 \), indicating that \( v_d[t] \) effectively captures low-frequency signals, while \( H_s(e^{j\omega}) \approx 0.0526 \), showing minimal response. Conversely, at high frequencies (\( \omega = \pi \)), \( H_s(e^{j\omega}) \approx 0.8829 \), demonstrating that \( v_s[t] \) accurately reflects high-frequency components. This example highlights the system's frequency separation properties, where \( v_d[t] \) captures slow-changing signals, and \( v_s[t] \) is sensitive to rapid changes.

\section{Experiments}

In this section, we present the experimental evaluation of the TS-LIF model across multiple time-series benchmarks. 
We analyze the model's forecasting performance, its robustness against various prediction settings, and its ability in temporal decomposition.



\subsection{Temporal Analysis}

\begin{figure}[h]
    \centering
    \begin{minipage}[b]{0.5\linewidth}
        \centering
        \includegraphics[width=\linewidth]{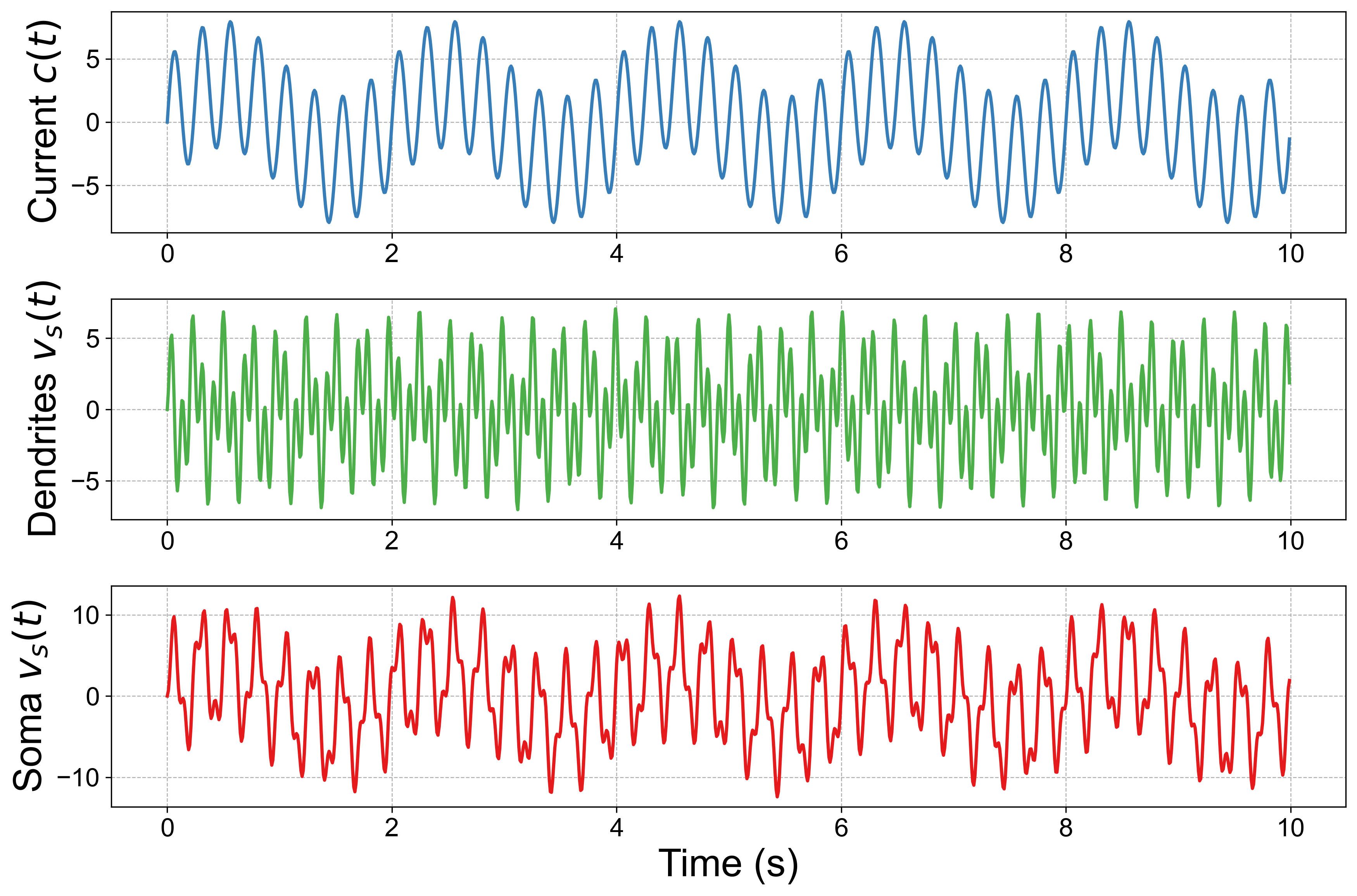}
        \caption{(a) TC-LIF Model Response}
    \end{minipage}%
    \begin{minipage}[b]{0.5\linewidth}
        \centering
        \includegraphics[width=\linewidth]{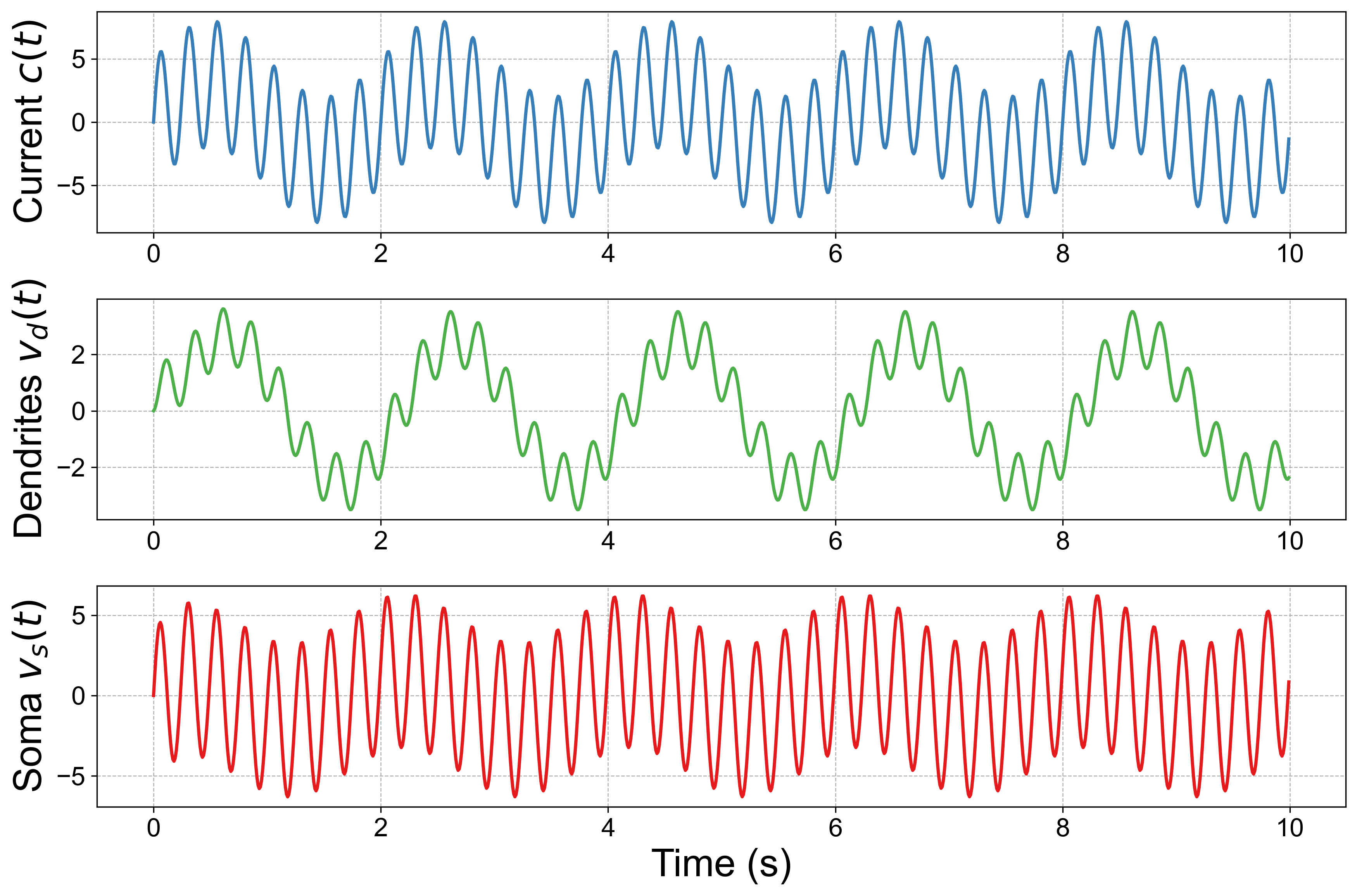}
        \caption{(b) TS-LIF Model Response}
    \end{minipage}
    \caption{Comparison of dendritic and somatic voltage responses between TC-LIF and TS-LIF models with mixed-frequency input current.}
    \label{fig:membrane_potential_comparison}
\end{figure}

We performed a temporal analysis to assess the TS-LIF model's ability to decompose input signals into distinct frequency components. 
To validate this temporal decomposition capability, we fed a mixed-frequency input current (blue line) into both TS-LIF and TC-LIF models, as depicted by the blue line in Figure \ref{fig:membrane_potential_comparison}. 
Detailed experimental settings are provided in Appendix A.4.

By visualizing the voltage responses of the dendritic (green line) and somatic (red line) compartments, we observed distinct behaviors between the models. 
The TS-LIF model effectively separated the input signal into its low- and high-frequency components, with the dendritic compartment retaining primarily low-frequency information and the somatic compartment focusing on high-frequency components. 
In contrast, the TC-LIF model did not demonstrate a clear separation, with both compartments reflecting mixed-frequency components.
These results emphasize the advantage of the TS-LIF model in managing temporal information, effectively capturing and processing multi-timescale dependencies in time series — an essential capability for handling complex temporal patterns.

\subsection{Main Results}

\begin{table*}[htbp]
\centering
\caption{
\label{tab:main_result}
Forecasting results on four benchmark datasets with different prediction horizons $L$. 
Results for our TS-LIF model with TCN, GRU, and Transformer architectures are included, while the remaining results are sourced from \citet{lv2024-efficient}. 
The top-performing and second-best scores are shown in bold and underlined, respectively. Arrows $\uparrow$ ($\downarrow$) denote whether higher or lower values are preferred. 
The \textbf{Avg. Rank} column reflects the average rank of each model across the different configurations.
}
\resizebox{\linewidth}{!}{
\begin{tabular}{ccr|cccc|cccc|cccc|cccc|c|c}\toprule
\multirow{2}{*}{Method} & \multirow{2}{*}{Spike} & \multirow{2}{*}{Metric} & \multicolumn{4}{c|}{\bf Metr-la} & \multicolumn{4}{c|}{\bf Pems-bay} & \multicolumn{4}{c|}{\bf Solar} & \multicolumn{4}{c|}{\bf Electricity} & \multirow{2}{*}{\textbf{Avg.}} & \multirow{2}{*}{\textbf{Avg. Rank}$\downarrow$}\\
\cline{4-19}
& & & $6$ & $24$ & $48$ & $96$ & $6$ & $24$ & $48$ & $96$ & $6$ & $24$ & $48$ & $96$ & $6$ & $24$ & $48$ & $96$ \\
\cline{1-21}
\multirow{2}{*}{ARIMA} & \multirow{2}{*}{\xmark} & R$^2$$\uparrow$ &  $.687$ & $.441$ & $.282$ & $.265$ & $.741$ & $.723$ & $.692$ & $.670$ & $.951$ & $.847$ & $.725$ & $.682$ & $.963$ & $.960$ & $.914$ & $.863$ & $.713$ & $9.9$\\
&  & RSE$\downarrow$ & $.575$ & $.742 $ & $.889$ & $.902$ & $.532 $ & $.548 $ & $\underline{.562}$ & $.612$ & $.202$ & $.365$ & $.588 $ & $.589 $ & $.522 $ & $.534 $ & $.564 $ & $.599$ & $.583$ & $9.8$\\ 
\cline{1-21}
\multirow{2}{*}{GP} & \multirow{2}{*}{\xmark} & R$^2$$\uparrow$ & $.685 $ & $.437 $ & $.265 $ & $.233 $ & $.732 $ & $.712 $ & $.689 $ & $.665 $ & $.944 $ & $.836 $ & $.711 $ & $.675 $ & $.962 $ & $.968 $ & $.912 $ & $.852$ & $.705$ & $11.1$\\ 
&  & RSE$\downarrow$ & $.572 $ & $.738 $ & $.912 $ & $.925 $ & $.544 $ & $.532$ & $.577 $ & $.592 $ & $.225 $ & $.388 $ & $.612 $ & $.575 $ & $.603 $ & $.612 $ & $.633 $ & $.642$ & $.605$ & $10.2$\\
\hline \hline
\multirow{2}{*}{TCN} & \multirow{2}{*}{\xmark} & R$^2$$\uparrow$ & $.820$ & $.601$ & $\underline{.455}$ & $\bf{.330}$ & $.881$ & $\underline{.749}$ & $\underline{.695}$ & $\bf{.689}$ & $.958$& $.871$ & $.737$ & $.661$ & $.975$  & $.973$ & $.968$ & $.962$ & $.770$ & $5.8$\\
&  & RSE$\downarrow$ & $.446$ & $.665$ & $.778$ & $\bf{.851}$ & $.373$ & $.541$ & $.583$ & $.587$ & $.210$& $.359$ & $.513$ & $.583$ & $.282$  & $.287$ & $\bf{.319}$ & $.345$ & $.483$ & $5.7$\\ 
\cline{1-21}
\multirow{2}{*}{Spike-TCN} & \multirow{2}{*}{\cmark} & R$^2$$\uparrow$ &  $.783$ & $.603$ & $\bf{.468}$ & $\underline{.326}$ & $.811$ & $.729$ & $.662$ & $.633$ & $.937$& $.840$ & $.708$ & $.650$ & $.970$  & $.963$ & $.958$ & $.953$ & $.750$ & $7.0$\\
&  & RSE$\downarrow$ & $.491$ & $.665$ & $.769$ & $.865$ & $.469$ & $.541$ & $.625$ & $.635$ & $.259$& $.401$ & $.541$ & $.596$ & $.333$  & $.342$ & $.368$ & $.389$ & $.518$ & $8.9$ \\ 
\cline{1-21}
\multirow{2}{*}{\bf{TS-TCN}} & \multirow{2}{*}{\cmark} & R$^2$$\uparrow$ &  \cellcolor{gray!20}$.810$ & \cellcolor{gray!20}$.605$ & \cellcolor{gray!20}$\bf.473$ & \cellcolor{gray!20}${.328}$ & \cellcolor{gray!20}$\bf.897$ & \cellcolor{gray!20}$\bf.759$ & \cellcolor{gray!20}$\bf.698$ & \cellcolor{gray!20}$.652$ & \cellcolor{gray!20}$\bf.964$& \cellcolor{gray!20}$\underline{.884}$ & \cellcolor{gray!20}$.762$ & \cellcolor{gray!20}$.720$ & \cellcolor{gray!20}$.980$  & \cellcolor{gray!20}$.971$ & \cellcolor{gray!20}$.968$ & \cellcolor{gray!20}$.962$ & \cellcolor{gray!20}$.777$ & \cellcolor{gray!20}$5.0$\\
&  & RSE$\downarrow$ & \cellcolor{gray!20}$.459$ & \cellcolor{gray!20}$.656$ & \cellcolor{gray!20}$\underline{.757}$ & \cellcolor{gray!20}$.857$ & \cellcolor{gray!20}$\bf{.354}$ & \cellcolor{gray!20}$\bf{.527}$ & \cellcolor{gray!20}$.590$ & \cellcolor{gray!20}$.633$ & \cellcolor{gray!20}$\bf{.189}$& \cellcolor{gray!20}$\bf{.325}$ & \cellcolor{gray!20}$.484$ & \cellcolor{gray!20}$.523$ & \cellcolor{gray!20}$.264$  & \cellcolor{gray!20}$.316$ & \cellcolor{gray!20}$.318$ & \cellcolor{gray!20}$.360$ & \cellcolor{gray!20}$.475$ & \cellcolor{gray!20}$4.6$ \\ 
\hline \hline
\multirow{2}{*}{GRU} & \multirow{2}{*}{\xmark} & R$^2$$\uparrow$ & $.759$ & $.429$ & $.301$ & $.194$ & $.747$ & $.703$ & $.691$ & $.665$ & $.950$ & $.875$ & $.781$ & $.737$ & $.981$ & $.972$ & $.971$ & $.964$ & $.733$ & $8.0$\\
&  & RSE$\downarrow$ & $.517 $ & $.797 $ & $.882 $ & $.947  $ & $.529 $ & $.573 $ & $.584 $ & $.608 $ & $.219 $ & $.355 $ & $.476 $ & $\underline{.522}$ & $.506 $ & $.598 $ & $.537 $ & $.587$ & $.573$ & $9.5$\\ 
\cline{1-21}
\multirow{2}{*}{Spike-GRU} & \multirow{2}{*}{\cmark} & R$^2$$\uparrow$ & $.846$ & $.615$ & $.427$ & $.275$ & $.864$ & $.741$ & $.688$ & $.657$ & $.912$& $.822$ & $.771$ & $.668$ & $.978$  & $.964$ & $.962$ & $.959$ & $.759$ & $8.8$\\
&  & RSE$\downarrow$ & $\underline{.414}$ & $.663$ & $.827$ & $.943$ & $.398$ & $.535$ & $.601$ & $.621$ & $.299$ & $.430$ & $.485$ & $.629$ & $.280$ & $.317$ & $\underline{.338}$ & $.484$ & $.517$ & $8.8$\\ 
\cline{1-21}
\multirow{2}{*}{Spike-RNN} & \multirow{2}{*}{\cmark} & R$^2$$\uparrow$ & $.846$ & $\underline{.622}$ & $.433$ & $.283$ & $.872$ & $.745$ & $.685$ & $.654$ & $.923$ & $.820$ & $.812$ & $.714$ & $.977$ & $.972$ & $.962$ & $.960$ & $.768$ & $7.4$\\
&  & RSE$\downarrow$ & $\bf{.412}$ & $\bf{.648}$ & $.794$ & $.935$ & $.387$ & $\underline{.528}$ & $.588$ & $.634$ & $.278$& $.425$ & $\bf{.435}$ & $.586$ & $.267$  & $.296$ & $.346$ & $.481$ & $.503$ & $7.1$\\ 
\cline{1-21}
\multirow{2}{*}{\bf{TS-GRU}} & \multirow{2}{*}{\cmark} & R$^2$$\uparrow$ & \cellcolor{gray!20}$\bf.848$ & \cellcolor{gray!20}$.618$ & \cellcolor{gray!20}$.430$ & \cellcolor{gray!20}$\underline{.329}$ & \cellcolor{gray!20}$.874$ & \cellcolor{gray!20}$.742$ & \cellcolor{gray!20}$.684$ & \cellcolor{gray!20}$.649$ & \cellcolor{gray!20}$.938$ & \cellcolor{gray!20}$.878$ & \cellcolor{gray!20}$\underline{.815}$ & \cellcolor{gray!20}$.722$ & \cellcolor{gray!20}$\bf.991$ & \cellcolor{gray!20}$\underline{.981}$ & \cellcolor{gray!20}$\textbf{.983}$ & \cellcolor{gray!20}$\underline{.976}$ & \cellcolor{gray!20}$\underline{.778}$ & \cellcolor{gray!20}$4.8$\\
&  & RSE$\downarrow$ & \cellcolor{gray!20}$\bf.412$ & \cellcolor{gray!20}$.651$ & \cellcolor{gray!20}$.795$ & \cellcolor{gray!20}$\underline{.853}$ & \cellcolor{gray!20}$.384$ & \cellcolor{gray!20}$.530$ & \cellcolor{gray!20}$.587$ & \cellcolor{gray!20}$.637$ & \cellcolor{gray!20}$.253$& \cellcolor{gray!20}$.349$ & \cellcolor{gray!20}$\underline{.426}$ & \cellcolor{gray!20}$.527$ & \cellcolor{gray!20}$.216$  & \cellcolor{gray!20}$\underline{.240}$& \cellcolor{gray!20}$\underline{.236}$ & \cellcolor{gray!20}$\underline{.271}$ & \cellcolor{gray!20}$\underline{.460}$ & \cellcolor{gray!20}$4.8$\\ 
\hline \hline
\multirow{2}{*}{Autoformer} & \multirow{2}{*}{\xmark} & R$^2$$\uparrow$ & $.762 $ & $.548 $ & $.411 $ & $.282$ & $.782$ & $.711 $ & $.689 $ & $.668 $ & $.960 $ & $.852 $ & $.791 $ & $.701 $ & $.980$ & $.977$ & $.975$ & $.963$ & $.753$ & $7.2$\\
&  & RSE$\downarrow$ & $.565 $ & $.692 $ & $.785 $ & $.872 $ & $.452 $ & $.543 $ & $.577 $ & $\bf{.565}$ & $.212 $ & $.432 $ & $.622 $ & $.685 $ & $.481 $ & $.506 $ & $.566 $ & $.548$ & $.569$ & $9.1$\\ 
\cline{1-21}
\multirow{2}{*}{iTransformer} & \multirow{2}{*}{\xmark} & R$^2$$\uparrow$ & $.829$ & $\bf{.623}$ & $.439 $ & $.285$ & $\underline{.887}$ & $.719 $ & $.685 $ & $.668$ & $\bf{.964}$ & $.879$ & $\underline{.799}$ & $\underline{.738}$ & $.979$ & $.977$ & $.975$ & $.964$ & $.776$ & $\underline{4.4}$\\
&  & RSE$\downarrow$ &$.436 $ &$\bf{.648}$ &$.780$ & $.878$ & $\underline{.362}$ & $.547$ & $\bf{.561}$ & $.584$ & $\underline{.191}$ & $.348$ & $.448$ & $.563$ & $.259$ & $.305$ & $.335$ & $.427$ & $.480$ & $4.7$\\ 
\cline{1-21}
\multirow{2}{*}{iSpikformer} & \multirow{2}{*}{\cmark}& R$^2$$\uparrow$ & $.817$ & $.618$ & $.440$ & $.279$ & $.879$ & $.744$ & $.687$ & $\underline{.674}$ & $\underline{.961}$ & $\underline{.876}$ & $.795$ & $\underline{.738}$ & $.977$ & $.974$ & $.972$ & $.963$ & $.775$ & $5.2$\\
&  & RSE$\downarrow$ & $.475$ & $.668$ & $\bf{.752}$ & $.905$ & $.376$ & $.536$ & $.569$ & $\underline{.580}$ & $.204$ & $.333$ & $.465$ & $\underline{.521}$ & $.263$ & $.284$ & $.338$ & $.348$ & $.476$ & $\underline{4.6}$\\ 
\cline{1-21}
\multirow{2}{*}{\bf{TS-former}} & \multirow{2}{*}{\cmark}& R$^2$$\uparrow$ & \cellcolor{gray!20}$\underline{.847}$ & \cellcolor{gray!20}$.620$ & \cellcolor{gray!20}$.445$ & \cellcolor{gray!20}$.283$ & \cellcolor{gray!20}$.874$ & \cellcolor{gray!20}$.735$ & \cellcolor{gray!20}$.683$ & \cellcolor{gray!20}$.669$ & \cellcolor{gray!20}$\underline{.961}$ & \cellcolor{gray!20}$\bf.886$ & \cellcolor{gray!20}$\bf.828$ & \cellcolor{gray!20}$\bf.774$ & \cellcolor{gray!20}$\underline{.987}$ & \cellcolor{gray!20}$\textbf{.985}$ & \cellcolor{gray!20}$\underline{.981}$ & \cellcolor{gray!20}$\textbf{.977}$ & \cellcolor{gray!20}$\bf.783$ & \cellcolor{gray!20}$\bf{3.5}$\\
&  & RSE$\downarrow$ & \cellcolor{gray!20}$.416$ & \cellcolor{gray!20}$.655$ & \cellcolor{gray!20}$.763$ & \cellcolor{gray!20}$.874$ & \cellcolor{gray!20}$.379$ & \cellcolor{gray!20}$.539$ & \cellcolor{gray!20}$.572$ & \cellcolor{gray!20}$.583$ & \cellcolor{gray!20}$.224$ & \cellcolor{gray!20}$\underline{.331}$ & \cellcolor{gray!20}$\bf{.382}$ & \cellcolor{gray!20}$\bf{.435}$ & \cellcolor{gray!20}$\bf{.197}$ & \cellcolor{gray!20}$\bf{.215}$ & \cellcolor{gray!20}$\bf{.234}$ & \cellcolor{gray!20}$\bf{.261}$ & \cellcolor{gray!20}$\bf{.441}$ & \cellcolor{gray!20}$\bf{3.3}$\\ 
\bottomrule
\end{tabular}
}
\end{table*}

The proposed TS-LIF model (TS-TCN, TS-GRU, and TS-former) was evaluated on four benchmark datasets (Metr-la, Pems-bay, Solar, and Electricity), following the setup in \citet{lv2024-efficient}, where TS-former represents an iTransformer architecture based on TS-LIF \citep{liu2024-itransformer}.
Hyperparameters were tuned via cross-validation, and performance was assessed using RSE and R$^2$ metrics.
As shown in Table \ref{tab:main_result}, Our TS-LIF consistently outperforms LIF-based SNN models (Spike-TCN, Spike-GRU, and iSpikeformer) across different metrics, particularly excelling in tasks requiring long-term forecasting.
For example, in the Solar dataset, the TS-TCN and TS-former achieved an average improvement of 8\% in R$^2$ and 16.8\% in RSE. 
Similarly, in the Electricity dataset, TS-GRU and TS-former showed significant improvements in RSE of 43.6\% and 25.0\%, respectively, for the 96-step prediction length. 
These results highlight the effectiveness of TS-LIF in capturing long-term dependencies, unlike traditional LIF neurons.

For RNN-based models, our TS-LIF achieved superior performance in both R$^2$ and RSE metrics compared to LIF-based and original ANN models.
Even for TCN and Transformer models, which inherently possess cross-time-scale capabilities, TS-LIF aslo provided noticeable improvements, resulting in higher average rankings across metrics.
This suggests that the integration of dendritic and somatic components in the TS-LIF framework enables the model to capture richer multi-scale temporal features, leading to improved predictive accuracy.

\subsection{Model Analysis}
This section assesses model robustness in the presence of missing values, evaluates the impact of training timesteps, and examines the effectiveness of different types of LIF neurons.

\subsubsection{Robustness Evaluation}

\begin{table}[htbp]
  \caption{Experimental performance of the TS-LIF model compared to the vanilla LIF on the Electricity dataset, evaluated under different ratios of missing values in the historical inputs. Model\_* indicates a backbone model with a prediction length of *, and Transformer\_6 represents the Transformer architecture with a prediction length of 6.}
  \vskip 0.05in
  \centering
  \begin{threeparttable}
  \begin{small}
  \renewcommand{\multirowsetup}{\centering}
  \setlength{\tabcolsep}{3.8pt}
  \resizebox{\linewidth}{!}{\begin{tabular}{c|c|cc|cc|cc|cc|cc}
    \toprule
    \multicolumn{2}{c|}{\multirow{1}{*}{{Missing Ratio}}} & 
    \multicolumn{2}{c}{\rotatebox{0}{\scalebox{1.0}{10\%}}} &
    \multicolumn{2}{c}{\rotatebox{0}{\scalebox{1.0}{20\%}}} &
    \multicolumn{2}{c}{\rotatebox{0}{\scalebox{1.0}{40\%}}} &
    \multicolumn{2}{c}{\rotatebox{0}{\scalebox{1.0}{60\%}}} &
    \multicolumn{2}{c}{\rotatebox{0}{\scalebox{1.0}{80\%}}} \\
    \cmidrule(lr){3-4} \cmidrule(lr){5-6}\cmidrule(lr){7-8} \cmidrule(lr){9-10} \cmidrule(lr){11-12}  
    \multicolumn{2}{c|}{Metric}  & \scalebox{1.0}{R$^2$$\uparrow$} & \scalebox{1.0}{RSE$\downarrow$}  & \scalebox{1.0}{R$^2$$\uparrow$} & \scalebox{1.0}{RSE$\downarrow$}  & \scalebox{1.0}{R$^2$$\uparrow$} & \scalebox{1.0}{RSE$\downarrow$}  & \scalebox{1.0}{R$^2$$\uparrow$} & \scalebox{1.0}{RSE$\downarrow$} & \scalebox{1.0}{R$^2$$\uparrow$} & \scalebox{1.0}{RSE$\downarrow$}\\
    \toprule
    \multirow{3}{*}{\scalebox{1.0}{Transformer\_6}} &\scalebox{1.0}{iSpikformer} & \scalebox{1.0}{0.977} & {\scalebox{1.0}{0.265}} & {\scalebox{1.0}{0.976}} & {\scalebox{1.0}{0.266}} & {\scalebox{1.0}{0.974}} & {\scalebox{1.0}{0.269}} & {\scalebox{1.0}{0.971}} & {\scalebox{1.0}{0.271}} & {\scalebox{1.0}{0.964}} & {\scalebox{1.0}{0.275}}\\
    &\scalebox{1.0}{\textbf{TS-former}} & \textbf{\scalebox{1.0}{0.987}} & \textbf{\scalebox{1.0}{0.199}} & \textbf{\scalebox{1.0}{0.987}} & \textbf{\scalebox{1.0}{0.200}} & \textbf{\scalebox{1.0}{0.987}} & \textbf{\scalebox{1.0}{0.206}} & \textbf{\scalebox{1.0}{0.986}} & \textbf{\scalebox{1.0}{0.207}} & \textbf{\scalebox{1.0}{0.983}} & \textbf{\scalebox{1.0}{0.211}}\\
    \cmidrule(lr){2-12}
    &\scalebox{1.0}{Promotion} & \scalebox{1.0}{1.0\%} & 24.9\% & \scalebox{1.0}{1.1\%} & \scalebox{1.0}{24.8\%}  & \scalebox{1.0}{1.3\%} & \scalebox{1.0}{23.4\%} & \scalebox{1.0}{1.5\%} & \scalebox{1.0}{23.6\%} & \scalebox{1.0}{1.9\%} & \scalebox{1.0}{23.2\%}\\
    \midrule
    \multirow{3}{*}{\scalebox{1.0}{Transformer\_96}} &\scalebox{1.0}{iSpikformer} & {\scalebox{1.0}{0.962}} & {\scalebox{1.0}{0.344}} & {\scalebox{1.0}{0.962}} & {\scalebox{1.0}{0.346}} & {\scalebox{1.0}{0.957}} & {\scalebox{1.0}{0.358}} & {\scalebox{1.0}{0.953}} & {\scalebox{1.0}{0.368}} & {\scalebox{1.0}{0.947}} & {\scalebox{1.0}{0.376}}\\
    &\scalebox{1.0}{\textbf{TS-former}} & \textbf{\scalebox{1.0}{0.977}} & \textbf{\scalebox{1.0}{0.263}} & \textbf{\scalebox{1.0}{0.976}} & \textbf{\scalebox{1.0}{0.262}} & \textbf{\scalebox{1.0}{0.974}} & \textbf{\scalebox{1.0}{0.270}} & \textbf{\scalebox{1.0}{0.971}} & \textbf{\scalebox{1.0}{0.274}} & \textbf{\scalebox{1.0}{0.971}} & \textbf{\scalebox{1.0}{0.279}}\\
    \cmidrule(lr){2-12}
    &\scalebox{1.0}{Promotion} & \scalebox{1.0}{1.5\%} & 23.5\% & \scalebox{1.0}{1.4\%} & \scalebox{1.0}{24.2\%}  & \scalebox{1.0}{1.7\%} & \scalebox{1.0}{24.5\%} & \scalebox{1.0}{1.8\%} & \scalebox{1.0}{25.5\%} & \scalebox{1.0}{2.4\%} & \scalebox{1.0}{25.7\%}\\
    \midrule
    \multirow{3}{*}{\scalebox{1.0}{GRU\_6}}&\scalebox{1.0}{Spike-GRU} & {\scalebox{1.0}{0.873}} & {\scalebox{1.0}{0.774}} &{\scalebox{1.0}{0.842}} & {\scalebox{1.0}{0.749}} &{\scalebox{1.0}{0.771}} & {\scalebox{1.0}{0.851}} & {\scalebox{1.0}{0.758}} & {\scalebox{1.0}{0.874}} & {\scalebox{1.0}{0.745}} & {\scalebox{1.0}{0.897}}\\
    &\scalebox{1.0}{\textbf{TS-GRU}} & \textbf{\scalebox{1.0}{0.986}} & \textbf{\scalebox{1.0}{0.235}} & \textbf{\scalebox{1.0}{0.983}} & \textbf{\scalebox{1.0}{0.242}} & \textbf{\scalebox{1.0}{0.979}} & \textbf{\scalebox{1.0}{0.256}} & \textbf{\scalebox{1.0}{0.965}} & \textbf{\scalebox{1.0}{0.328}} & \textbf{\scalebox{1.0}{0.939}} & \textbf{\scalebox{1.0}{0.438}}\\
    \cmidrule(lr){2-12}
    &\cellcolor{grey!15}\scalebox{1.0}{Promotion} & \cellcolor{grey!15}\scalebox{1.0}{12.9\%} & \cellcolor{grey!15}69.6\% & \cellcolor{grey!15}\scalebox{1.0}{16.7\%} & \cellcolor{grey!15}\scalebox{1.0}{67.6\%}  & \cellcolor{grey!15}\scalebox{1.0}{26.9\%} & \cellcolor{grey!15}\scalebox{1.0}{69.9\%} & \cellcolor{grey!15}\scalebox{1.0}{27.3\%} & \cellcolor{grey!15}\scalebox{1.0}{62.4\%} & \cellcolor{grey!15}\scalebox{1.0}{26.0\%} & \cellcolor{grey!15}\scalebox{1.0}{51.1\%}\\
    \midrule
    \multirow{3}{*}{\scalebox{1.0}{GRU\_96}} &\scalebox{1.0}{Spike-GRU} &{\scalebox{1.0}{0.842}} & {\scalebox{1.0}{0.693}} & {\scalebox{1.0}{0.827}} & {\scalebox{1.0}{0.729}} & {\scalebox{1.0}{0.803}} & {\scalebox{1.0}{0.789}} & {\scalebox{1.0}{0.783}} & {\scalebox{1.0}{0.828}} &{\scalebox{1.0}{0.760}} & {\scalebox{1.0}{0.871}}\\
    &\scalebox{1.0}{\textbf{TS-GRU}} & \textbf{\scalebox{1.0}{0.975}} & \textbf{\scalebox{1.0}{0.268}} & \textbf{\scalebox{1.0}{0.973}} & \textbf{\scalebox{1.0}{0.324}} & \textbf{\scalebox{1.0}{0.970}} & \textbf{\scalebox{1.0}{0.303}} & \textbf{\scalebox{1.0}{0.956}} & \textbf{\scalebox{1.0}{0.367}} & \textbf{\scalebox{1.0}{0.922}} & \textbf{\scalebox{1.0}{0.489}}\\
    \cmidrule(lr){2-12}
    &\cellcolor{grey!15}\scalebox{1.0}{Promotion} & \cellcolor{grey!15}\scalebox{1.0}{15.7\%} & \cellcolor{grey!15}61.3\% & \cellcolor{grey!15}\scalebox{1.0}{17.6\%} & \cellcolor{grey!15}\scalebox{1.0}{55.6\%}  & \cellcolor{grey!15}\scalebox{1.0}{20.7\%} & \cellcolor{grey!15}\scalebox{1.0}{61.5\%} & \cellcolor{grey!15}\scalebox{1.0}{22.0\%} & \cellcolor{grey!15}\scalebox{1.0}{55.6\%} & \cellcolor{grey!15}\scalebox{1.0}{21.3\%} & \cellcolor{grey!15}\scalebox{1.0}{43.8\%}\\
     \midrule
    \multirow{3}{*}{\scalebox{1.0}{TCN\_6}}&\scalebox{1.0}{Spike-TCN} & {\scalebox{1.0}{0.971}} & {\scalebox{1.0}{0.341}} &{\scalebox{1.0}{0.970}} & {\scalebox{1.0}{0.347}} &{\scalebox{1.0}{0.967}} & {\scalebox{1.0}{0.352}} & {\scalebox{1.0}{0.960}} & {\scalebox{1.0}{0.361}} & {\scalebox{1.0}{0.954}} & {\scalebox{1.0}{0.369}}\\
    &\scalebox{1.0}{\textbf{TS-TCN}} & \textbf{\scalebox{1.0}{0.980}} & \textbf{\scalebox{1.0}{0.263}} & \textbf{\scalebox{1.0}{0.979}} & \textbf{\scalebox{1.0}{0.266}} & \textbf{\scalebox{1.0}{0.976}} & \textbf{\scalebox{1.0}{0.272}} & \textbf{\scalebox{1.0}{0.973}} & \textbf{\scalebox{1.0}{0.280}} & \textbf{\scalebox{1.0}{0.967}} & \textbf{\scalebox{1.0}{0.291}}\\
    \cmidrule(lr){2-12}
    &\scalebox{1.0}{Promotion} & \scalebox{1.0}{0.9\%} & 22.8\% & \scalebox{1.0}{1.0\%} & \scalebox{1.0}{23.3\%}  & \scalebox{1.0}{1.0\%} & \scalebox{1.0}{22.7\%} & \scalebox{1.0}{0.7\%} & \scalebox{1.0}{22.4\%} & \scalebox{1.0}{1.3\%} & \scalebox{1.0}{21.1\%}\\
    \midrule
    \multirow{3}{*}{\scalebox{1.0}{TCN\_96}} &\scalebox{1.0}{Spike-TCN} &{\scalebox{1.0}{0.953}} & {\scalebox{1.0}{0.390}} & {\scalebox{1.0}{0.952}} & {\scalebox{1.0}{0.394}} & {\scalebox{1.0}{0.948}} & {\scalebox{1.0}{0.409}} & {\scalebox{1.0}{0.942}} & {\scalebox{1.0}{0.418}} &{\scalebox{1.0}{0.937}} & {\scalebox{1.0}{0.426}}\\
    &\scalebox{1.0}{\textbf{TS-TCN}} & \textbf{\scalebox{1.0}{0.962}} & \textbf{\scalebox{1.0}{0.361}} & \textbf{\scalebox{1.0}{0.961}} & \textbf{\scalebox{1.0}{0.364}} & \textbf{\scalebox{1.0}{0.958}} & \textbf{\scalebox{1.0}{0.370}} & \textbf{\scalebox{1.0}{0.956}} & \textbf{\scalebox{1.0}{0.375}} & \textbf{\scalebox{1.0}{0.952}} & \textbf{\scalebox{1.0}{0.389}}\\
    \cmidrule(lr){2-12}
    &\scalebox{1.0}{Promotion} & \scalebox{1.0}{1.0\%} & 7.4\% & \scalebox{1.0}{0.9\%} & \scalebox{1.0}{7.6\%}  & \scalebox{1.0}{1.0\%} & \scalebox{1.0}{9.5\%} & \scalebox{1.0}{1.4\%} & \scalebox{1.0}{10.2\%} & \scalebox{1.0}{1.6\%} & \scalebox{1.0}{8.6\%}\\  
    \bottomrule
  \end{tabular}
  }
    \end{small}
    \label{robustness}
  \end{threeparttable}
\end{table}

To verify the robustness of the proposed TS-LIF model, we evaluated its performance on the Electricity dataset under different ratios of missing values in the historical inputs, comparing it to the vanilla LIF-based models. The models were assessed under missing data ratios of 10\%, 20\%, 40\%, 60\%, and 80\%.
The results are presented in Table \ref{robustness}.

The experimental results indicate that TS-LIF consistently outperforms the vanilla LIF models across different missing value scenarios, as evidenced by higher R$^2$ values and lower RSE scores.
Compared to the LIF-based models, TS-LIF shows a significantly smaller reduction in prediction accuracy as the missing data ratio increases, particularly in GRU-based models.
Since GRU inherently has a weaker capability to capture long-term dynamics compared to TCN and Transformer, the enhancements introduced by TS-LIF greatly improve its temporal feature extraction ability.
Specifically, TS-LIF improves R$^2$ by approximately 20\% and reduces RSE by around 50\% compared to the baseline LIF-based models. 
These improvements highlight the effectiveness of TS-LIF in capturing complex temporal dependencies and maintaining robustness under challenging conditions with substantial data loss.

\subsubsection{Training Timesteps}

\begin{figure}[htbp]
    \centering
    \includegraphics[width=\textwidth]{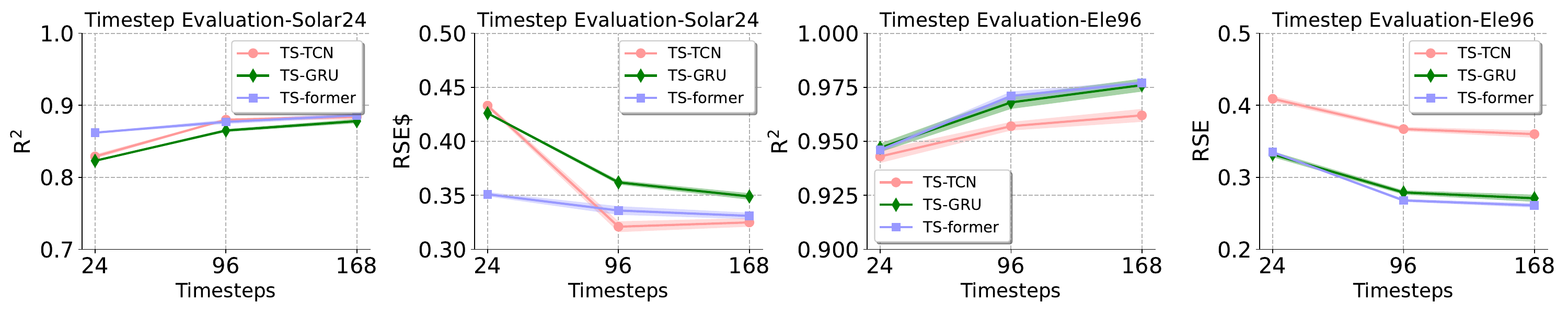}
    \caption{Impact of Training Timesteps on Forecasting Performance of TS-LIF Model Across Different Architectures. 
    Solar dataset with a prediction length of 24 and Electricity dataset with a prediction length of 96. We plot mean and std for each experiment over 3 different random seeds.}
    \label{fig:timesteps}
\end{figure}

Figure \ref{fig:timesteps} illustrates the forecasting performance of our TS-LIF model, evaluated using TCN, GRU, and Transformer architectures over different training timesteps (24, 96, and 168) for the Solar and Electricity datasets. 
For each architecture, the TS-LIF model consistently improves performance as the training timesteps increase from 24 to 168. 
This trend is evident from the rising R$^2$ values and decreasing RSE scores, indicating enhanced accuracy with more extended training. 
Notably, the most substantial improvements are observed in the transition from 24 to 96 timesteps, showcasing the model's capability to leverage longer training periods to capture complex temporal patterns more effectively.



\subsubsection{LIF Neurons}

\begin{figure}[htbp]
    \centering
    \includegraphics[width=\textwidth]{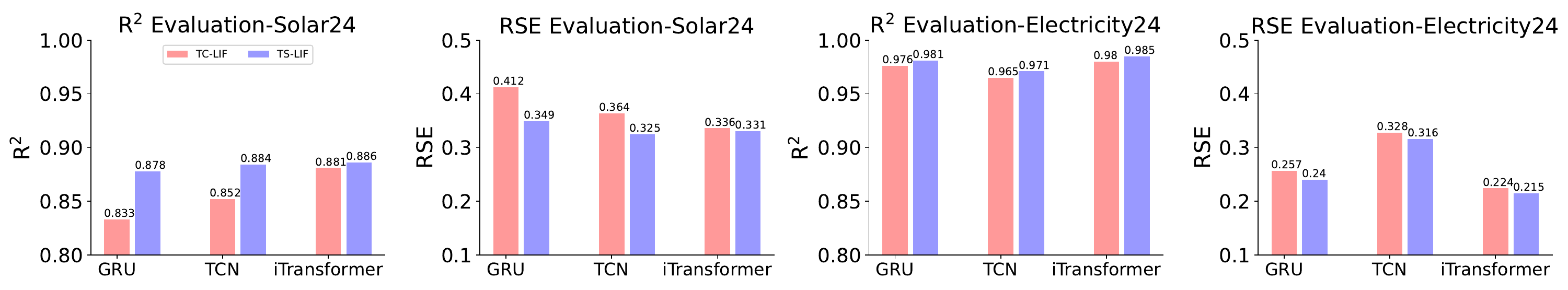}
    \caption{Forecasting Accuracy Comparison of TS-LIF and TC-LIF Neurons on Solar and Electricity datasets with a prediction length of 24.}
    \label{fig:tclif}
\end{figure}

In this section, we evaluate the accuracy of the proposed TS-LIF model compared to the dual-compartment TC-LIF model. 
As shown in Figure \ref{fig:tclif}, TS-LIF consistently outperforms TC-LIF across all three architectures and both datasets. 
The R$^2$ values for TS-LIF are significantly higher, particularly with GRU and TCN, demonstrating improved predictive accuracy. 
The corresponding RSE scores also confirm this trend, with TS-LIF showing consistently lower errors, indicating better fit and reduced prediction errors. 
These improvements highlight TS-LIF's superior capability to capture underlying temporal patterns and effectively manage multi-scale information in the datasets.

\section{Conclusion}
In this work, we introduced the Temporal Segment Leaky Integrate-and-Fire (TS-LIF) model, a novel spiking neural network (SNN) neuron architecture designed specifically for time series forecasting. The TS-LIF model features a dual-compartment structure, with dendritic and somatic compartments processing different frequency components, allowing for effective multi-timescale information integration. This compartmentalization enables TS-LIF to enhance both low- and high-frequency signal processing, addressing key limitations of traditional Leaky Integrate-and-Fire (LIF) neurons in capturing long-term dependencies and multi-scale dynamics.

We theoretically proved the stability conditions for the TS-LIF model, ensuring robustness across a wide range of temporal inputs. Through frequency response analysis, we demonstrated how the dendritic and somatic compartments contribute to efficient temporal decomposition. Our empirical evaluation on four benchmark datasets showed that the TS-LIF model consistently outperformed conventional LIF-based SNNs as well as artificial neural networks, particularly in long-term forecasting scenarios. Moreover, the TS-LIF model showed resilience under missing input data, maintaining superior accuracy compared to baseline models.
The proposed model advances the state-of-the-art in SNNs for time series forecasting by combining biologically inspired design with computational efficiency, providing a promising solution for applications requiring robust temporal processing in resource-constrained environments. 

\section{Acknowledgment}
This research is supported by the Joint NTU-WeBank Research Centre on Fintech, Nanyang Technological University, Singapore.

\bibliography{iclr2025_conference}
\bibliographystyle{iclr2025_conference}

\newpage

\appendix
\section{Appendix}

\subsection{Experiment Settings for Main Results}
In this section, we outline the experimental setup used to evaluate the performance of the proposed TD-LIF model. 
We conducted experiments on several benchmark time series datasets, including Metr-LA \citep{li2017-diffusion}, which records average traffic speed on highways in Los Angeles County; 
Pems-Bay \citep{li2017-diffusion}, capturing traffic speed data in the Bay Area; 
Electricity \citep{lai2018-modeling}, which tracks hourly electricity consumption in kWh; 
and Solar \citep{lai2018-modeling}, detailing solar power production. Preprocessing steps were applied to ensure consistency across datasets, standardizing input dimensions, sampling rates, and data normalization.

The TS-LIF model framework was implemented in line with the approach in \citet{lv2024-efficient}, incorporating CNN-based TCNs \citep{bai2018-empirical}, RNN-based GRUs \citep{cho2014-learning}, and Transformer-based models such as Autoformer \citep{wu2021-autoformer} and iTransformer \citep{liu2024-itransformer}. As for the SNN-based structure, we introduce the TCN, RNN, GRUs, and Transformer models of the SNN format from \cite{lv2024-efficient}.
Hyperparameters, including learning rate, timestep intervals, and feature-mixing weights, were optimized through cross-validation. We employ two statistical metrics: the Root Relative Squared Error (RSE) and the coefficient of determination ($R^2$) followed by the \cite{lv2024-efficient} settings.
Detailed descriptions of the experimental settings and hyperparameter configurations are provided in the appendix.

\subsection{Dataset and Metric Details} 
\noindent\textbf{Datasets}. The details of the datasets used in the main experiment are shown in Table ~\ref{tab:dataset}. In the experimental partitioning of datasets Metr-la and Pems-bay, we adopted a train-validation-test ratio of 0.7, 0.2, and 0.1, respectively, while for datasets Solar and Electricity, we used ratios of 0.6, 0.2, and 0.2. The settings for history and prediction lengths in the experiments followed those in the paper by \cite{lv2024-efficient}, except that for the history length in datasets Metr-la and Pems-bay, we added a setting of 168 to further improve experimental performance.
\begin{table}[h!]
  \centering
  \caption{Properties of the datasets in experiments}
\resizebox{1.0\linewidth}{!}{\begin{tabular}{l|cccccc}
\hline
  \small DATASET & \small Dimension & \small Domain & \small Freq & \small Samples & \small Context Length & \small Pred Length  \\ \hline
\small Metr-la  & \small 207  & \small $\mathbb{R}^{+}$ & \small 30-min & \small  34,272 & \small \{12, 168\} & \small \{6, 24, 48, 96\} \\   
\small Pems-bay  & \small  325 & \small $\mathbb{R}^{+}$ & \small 30-min & \small 52,116 & \small \{12, 168\}  &  \small \{6, 24, 48, 96\}  \\ 
\small Solar  & \small 137  & \small $\mathbb{R}^{+}$ & \small Hourly & \small  52,560 & \small 168 & \small \{6, 24, 48, 96\} \\ 
\small Electricity  & \small  321  & \small $\mathbb{R}^{+}$ & \small Hourly & \small  26,304 & \small 168 & \small \{6, 24, 48, 96\} \\  \hline
\end{tabular}
}
  \label{tab:dataset}
\end{table} \\
\noindent\textbf{Metrics}.To comprehensively evaluate our model's performance, we employ two statistical metrics: the Root Relative Squared Error (RSE) and the coefficient of determination ($R^2$). The RSE measures the relative discrepancy between the predicted and actual values, while the $R^2$ indicates the proportion of variance in the dependent variable that is predictable from the independent variables. These metrics are calculated as follows:
\begin{equation}
\begin{aligned}
\mathrm{RSE}&=\sqrt{\frac{\sum_{m=1}^M||\mathbf{Y}^m-\hat{\mathbf{Y}}^m||^2}{\sum_{m=1}^M||\mathbf{Y}^m-\bar{\mathbf{Y}}||^2}}, \\
R^2&=\frac1{MCL}\sum_{m=1}^M\sum_{c=1}^C\sum_{l=1}^L\left[1-\frac{(Y^m_{c,l}-\hat{Y}^m_{c,l})^2}{(Y^m_{c,l}-\bar{Y}_{c,l})^2}\right],
\end{aligned}
\end{equation}
where $M$ denotes the number of samples in the test set, $C$ represents the number of channels or variables, and $L$ is the prediction horizon. The true values for the $m$-th sample are denoted by $\mathbf{Y}^m$, and their average over all samples is $\bar{\mathbf{Y}}$. Specifically, $Y_{c,l}^m$ represents the $l$-th future value of the $c$-th variable for the $m$-th sample, with its mean across all samples given by $\bar{Y}_{c,l}$. The predicted values corresponding to these true values are denoted by $\hat{\mathbf{Y}}^m$ and $\hat{Y}_{c,l}^m$, respectively.
\subsection{Implementation Details}

In this section, we summarize the detailed experiment setup of our TS-LIF. Table~\ref{hyperparameter1} and ~\ref{hyperparameter2} show the hyperparameters of our overall structure in three types of backbones (TCN, GRU, Transformer). 
As for the timesteps in the SNN structures, we align them with the history length in each setting. The threshold of the TS-LIF is set to 1.0.

\begin{table}[htbp]
  \centering
  \caption{Hyperparameters of different backbones (TCN, GRU, Transformer) used for each dataset}
    \resizebox{1.0\linewidth}{!}{\begin{tabular}{l|cccccc}
    \toprule
    Datesets & TCN Layers & TCN Kernels & GRU Layers  & Transformer Layers & Attention Heads &  Attention Dim \\
    \midrule
    Metr-la & 3     & 3     & 1     & 2     & 8     & 256 \\
    Pems-bay & 3    & 16    & 1    & 2    & 8    & 512 \\
    Solar & 3     & 16    & 1     & 2     & 8     & 512 \\
    Electricity & 3   & 3     & 1     & 2     & 8     & 256 \\
    \bottomrule
    \end{tabular}
    }
    \label{hyperparameter1}
\end{table} 

\begin{table}[htbp]
  \centering
  \caption{Training details of different backbones (TCN, GRU, Transformer) used for each dataset}
    \resizebox{1.0\linewidth}{!}{\begin{tabular}{l|cccccc}
    \toprule
    Datesets & TCN Hidden & TCN Dilation  & GRU Hidden  & Transformer d\_ff & Learning Rate &  Batch Size \\
    \midrule
    Metr-la & 64     & 2     & 128     & 1024     & \{.0001, .0005\}    & \{32, 64 \} \\
    Pems-bay & 64    & 2    & 128    & 2048    & \{.0001, .0005\}    & \{32, 64 \} \\
    Solar & 64     & 2     & 128     & 2048     & .0001    & 64 \\
    Electricity & 64   & 2     & 128     & 1024     & .0001     & 64 \\
    \bottomrule
    \end{tabular}
    }
    \label{hyperparameter2}
\end{table}




\subsection{Experiment Settings for Temporal Analysis}
The injected current, \( I(t) \), consists of two sinusoidal components: a low-frequency component, \( I_{\text{low\_freq}} \), with an amplitude of 3 and a frequency of 0.5 Hz, and a high-frequency component, \( I_{\text{high\_freq}} \), with an amplitude of 5 and a frequency of 4 Hz. This combination represents a complex input environment with both slow and rapid variations, simulating mixed-frequency stimuli.

For the TCLIF model, we adopted parameters \( \alpha_1 = \alpha_2 = 1 \), \( \beta_1 = -0.5 \), and \( \beta_2 = 0.5 \), as suggested in \citet{zhang2024-tclif}. In contrast, the TS-LIF model was set with \( \alpha_1 = 0.95 \), \( \alpha_2 = 0.05 \), \( \beta_1 = 0 \), and \( \beta_2 = -0.9 \), which corresponds to the parameter settings used in the frequency response analysis in the previous subsection. These values were selected to facilitate low-pass filtering in the dendritic compartment (\( v_d[t] \)) and high-pass filtering in the somatic compartment (\( v_s[t] \)).

\subsection{Theoretical Energy Consumption Calculation}

The theoretical energy consumption for each layer during inference is determined based on the operations performed by spiking neural networks (SNNs) and artificial neural networks (ANNs) \citep{yao2023-attention}.

For SNNs, the energy required by layer $l$ is calculated as:
\[
Energy(l) = E_{AC} \times SOPs(l),
\]
where $SOPs(l)$ is the number of spike-based accumulate (AC) operations, and $E_{AC}$ represents the energy per AC operation. 

For ANNs, the energy consumption for layer $b$ is:
\[
Energy(b) = E_{MAC} \times FLOPs(b),
\]
where $FLOPs(b)$ refers to the number of floating-point multiply-and-accumulate (MAC) operations, and $E_{MAC}$ is the energy per MAC operation. The constants are set as $E_{MAC} = 4.6 \, \text{pJ}$ and $E_{AC} = 0.9 \, \text{pJ}$, assuming operations are performed on 45nm hardware.

For SNNs, the number of synaptic operations in layer $l$ is further estimated as:
\[
SOPs(l) = T \times \gamma \times FLOPs(l),
\]
where $T$ is the number of timesteps required in the simulation, and $\gamma$ is the firing rate of the input spike train for layer $l$. 


\begin{table*}[htp]
\small
\begin{center}
\caption{\label{tab:energy}
Energy consumption per sample of the Electricity dataset during inference. 
"OPs" includes SOPs for SNNs and FLOPs for ANNs. 
"SOPs" refers to synaptic operations in SNNs, and "FLOPs" denotes floating-point operations in ANNs.}

\resizebox{\linewidth}{!}{

\begin{tabular}{c|c|c|c|c|c|c}
\toprule
\hline
\bf Model & \bf Param(M) & \bf OPs ($\bf G$) & \bf Energy ($\bf mJ$) & \bf Energy Reduction & \bf Train/Infer Time (s) & \bf R$^2$ \\
\hline

TCN & $0.460$ & $0.14$ & $0.64$ & - & 21.34/11.47 & $.973$  \\
Spike-TCN & $0.461$ & $0.15$ & $0.23$ & $63.60\% \downarrow$ & $306.91/27.85$ & $.963$ \\ 
\bf TS-TCN & $0.465$ & $0.19$ & $0.25$ & $60.93\% \downarrow$ & $308.26/28.14$ & $.971$ \\ 
\hline

GRU & $1.288$ & $1.32$ & $6.07$ & - & 37.73/7.35 & $.972$  \\
Spike-GRU & $1.289$ & $1.63$ & $1.51$ & $75.05\% \downarrow$ & $235.46/10.05$ & $.964$ \\ 
\bf TS-GRU & $1.291$ & $1.67$ & $1.58$ & $73.80\%  \downarrow$ & $246.23/9.78$ & $\bf.981$ \\
\hline

iTransformer & $1.634$ & $2.05$ & $9.47$ & - & $7.24/6.38$ & $.977$  \\
iSpikformer & $1.634$ & $3.55$ & $3.19$ & $66.30\% \downarrow$ & $49.84/8.69$ & $.974$ \\ 
\bf TS-former & $1.640$ & $3.59$ & $3.22$ & $65.99\% \downarrow$ & $50.36/8.72$ & $\bf.985$ \\
\bottomrule
\hline
\end{tabular}
}
\end{center}
\end{table*}

\subsection{Performance Comparison on SOTA Time Series Forecasting Methods}

Table~\ref{LIF-comparison-TS} compares the performance of our TS-former with SparseTSF \citep{lin2024-sparsetsf}, SparseTSF* (where ReLU is replaced by TS-LIF), and SAMformer \citep{ilbert2024-samformer} on the Metr-la and Electricity datasets for prediction lengths of 24, 48, and 96.

On the Metr-la dataset, TS-former achieves the best results across all metrics and prediction lengths, demonstrating its ability to effectively capture complex temporal dependencies. For example, at a prediction length of 24, our TS-former achieves an $R^2$ of 0.620 and RSE of 0.655, outperforming both SparseTSF and SAMformer.
On the Electricity dataset, SparseTSF achieves slightly better performance in some cases, such as an $R^2$ of 0.991 and RSE of 0.167 at a prediction length of 24. However, TS-former remains competitive, delivering consistent and robust results across different prediction lengths.
These results highlight the effectiveness of TS-LIF in SparseTSF* and the overall robustness of TS-former in time series forecasting tasks.

\begin{table*}[!t]
\centering
\caption{Performance of our TS-former with SparseTSF and SAMformer of 3 prediction lengths (24, 48, 96) on the Metr-la and Electricity datasets. SparseTSF*: replace the ReLU function of SparseTSF with our TS-LIF. \textbf{Bold} numbers represent the best outcomes.}
\resizebox{\linewidth}{!}{ 
    \begin{tabular}{@{}|ccccccc|cccccc@{}} 
    \toprule
    \multicolumn{1}{l|}{Datasets} & \multicolumn{6}{c|}{Metr-la} & \multicolumn{6}{c}{Electricity} \\ 
    \cmidrule(lr){1-1} \cmidrule(lr){2-7} \cmidrule(lr){8-13}
    \multicolumn{1}{l|}{Lengths} & \multicolumn{2}{c}{24} & \multicolumn{2}{c}{48} & \multicolumn{2}{c|}{96} & \multicolumn{2}{c}{24} & \multicolumn{2}{c}{48} & \multicolumn{2}{c}{96}\\ 
    \cmidrule(lr){1-1} \cmidrule(lr){2-3} \cmidrule(lr){4-5} \cmidrule(lr){6-7} \cmidrule(lr){8-9} \cmidrule(lr){10-11} \cmidrule(lr){12-13} 
    \multicolumn{1}{l|}{Metrics} &  \scalebox{1.0}{R$^2$$\uparrow$} &  \scalebox{1.0}{RSE$\downarrow$} &  \scalebox{1.0}{R$^2$$\uparrow$} &  \scalebox{1.0}{RSE$\downarrow$} &  \scalebox{1.0}{R$^2$$\uparrow$} &  \scalebox{1.0}{RSE$\downarrow$} &  \scalebox{1.0}{R$^2$$\uparrow$} &  \scalebox{1.0}{RSE$\downarrow$} &  \scalebox{1.0}{R$^2$$\uparrow$} &  \scalebox{1.0}{RSE$\downarrow$} &  \scalebox{1.0}{R$^2$$\uparrow$} &  \scalebox{1.0}{RSE$\downarrow$}\\
    \midrule
    \multicolumn{1}{c|}{SparseTSF}  & 0.576 & 0.681 & 0.427 & 0.792 & 0.253 & 0.916  & \bf 0.991 & \bf 0.167 & \bf 0.986 & \bf 0.195 & \bf 0.982 & \bf 0.232   \\
    \multicolumn{1}{c|}{SparseTSF*}  & 0.580 & 0.692 & 0.426 & 0.801 & 0.247 & 0.924 & 0.990 & 0.177 & 0.985 & 0.201 & \bf 0.982 & 0.234   \\
    \multicolumn{1}{c|}{SAMformer} & 0.549 & 0.739 & 0.401 & 0.863 & 0.219 & 0.965 & 0.983 & 0.218 & 0.980 & 0.239 & 0.978 & 0.257   \\
    \multicolumn{1}{c|}{\bf TS-former} & \bf 0.620 & \bf 0.655 & \bf 0.445 & \bf 0.763 & \bf 0.283 & \bf 0.874 & 0.985 & 0.215 & 0.981 & 0.234 & 0.977 & 0.261   \\    
    \bottomrule
    \end{tabular}
}
\label{LIF-comparison-TS}
\end{table*}

\subsection{Comparison of TS-LIF with Other LIF Neurons}

Table~\ref{LIF-comparison_lif} compares the performance of our TS-LIF with TC-LIF, LM-H, and CLIF in the GRU backbone on the Metr-la and Electricity datasets for prediction lengths of 6, 24, and 96.
TS-LIF consistently outperforms the baseline methods across all metrics and prediction lengths. For example, on the Metr-la dataset, TS-LIF achieves the highest $R^2$ of 0.848 and the lowest RSE of 0.412 at a prediction length of 6. Similarly, on the Electricity dataset, TS-LIF achieves an $R^2$ of 0.991 and RSE of 0.216 at the same prediction length, demonstrating its robustness and effectiveness in modeling temporal dependencies.
These results highlight the superiority of TS-LIF over existing LIF structures, making it a strong choice for time series forecasting tasks.

\begin{table*}[!t]
\centering
\caption{Performance of our TS-LIF with other LIF neurons (TC-LIF, LM-H, and CLIF) in the GRU backbone. \textbf{Bold} numbers represent the best outcomes.}
\resizebox{\linewidth}{!}{ 
    \begin{tabular}{@{}|ccccccc|cccccc@{}} 
    \toprule
    \multicolumn{1}{l|}{Datasets} & \multicolumn{6}{c|}{Metr-la} & \multicolumn{6}{c}{Electricity} \\ 
    \cmidrule(lr){1-1} \cmidrule(lr){2-7} \cmidrule(lr){8-13} 
    \multicolumn{1}{l|}{Lengths} & \multicolumn{2}{c}{6} & \multicolumn{2}{c}{24} & \multicolumn{2}{c|}{96} & \multicolumn{2}{c}{6} & \multicolumn{2}{c}{24} & \multicolumn{2}{c}{96}\\ 
    \cmidrule(lr){1-1} \cmidrule(lr){2-3} \cmidrule(lr){4-5} \cmidrule(lr){6-7} \cmidrule(lr){8-9} \cmidrule(lr){10-11} \cmidrule(lr){12-13} 
    \multicolumn{1}{c|}{Metrics} &  \scalebox{1.0}{R$^2$$\uparrow$} &  \scalebox{1.0}{RSE$\downarrow$} &  \scalebox{1.0}{R$^2$$\uparrow$} &  \scalebox{1.0}{RSE$\downarrow$} &  \scalebox{1.0}{R$^2$$\uparrow$} &  \scalebox{1.0}{RSE$\downarrow$} &  \scalebox{1.0}{R$^2$$\uparrow$} &  \scalebox{1.0}{RSE$\downarrow$} &  \scalebox{1.0}{R$^2$$\uparrow$} &  \scalebox{1.0}{RSE$\downarrow$} &  \scalebox{1.0}{R$^2$$\uparrow$} &  \scalebox{1.0}{RSE$\downarrow$} \\
    \midrule
    \multicolumn{1}{c|}{TC-LIF} & 0.828 & 0.453 & 0.594 & 0.673 & 0.259 & 0.956 & 0.978 & 0.263 & 0.976 & 0.257 & 0.941 & 0.503  \\
    \multicolumn{1}{c|}{LM-H} & 0.812 & 0.464 & 0.570 & 0.719 & 0.246 & 0.973 & 0.971 & 0.269 & 0.969 & 0.280 & 0.936 & 0.512  \\
    \multicolumn{1}{c|}{CLIF} & 0.837 & 0.429 & 0.606 & 0.667 & 0.271 & 0.930 & 0.973 & 0.259 & 0.972 & 0.276 & 0.954 & 0.376  \\
    \multicolumn{1}{c|}{\bf TS-LIF} & \bf 0.848 & \bf 0.412 & \bf 0.618 & \bf 0.651 & \bf 0.329 & \bf 0.853 & \bf 0.991 & \bf 0.216 & \bf 0.981 & \bf 0.240 & \bf 0.976 & \bf 0.271  \\
    \bottomrule
    \end{tabular}
}
\label{LIF-comparison_lif}
\end{table*}
\subsection{Robustness Analysis on the Metr-la dataset}
To further verify the robustness of the proposed TS-LIF model, we evaluated its performance on the Metr-la dataset under different ratios of missing values in the historical inputs, comparing it to the vanilla LIF-based models. The models were assessed under missing data ratios of 10\%, 20\%, 40\%, 60\%, and 80\%.
The results are presented in Table \ref{new-robustness}.

\begin{table}[htbp]
  \caption{Experimental performance of the TS-LIF model compared to the vanilla LIF on the Metr-la dataset, evaluated under different ratios of missing values in the historical inputs. Model\_* indicates a backbone model with a prediction length of *, and Transformer\_6 represents the Transformer architecture with a prediction length of 6.}
  \vskip 0.05in
  \centering
  \begin{threeparttable}
  \begin{small}
  \renewcommand{\multirowsetup}{\centering}
  \setlength{\tabcolsep}{3.8pt}
  \resizebox{\linewidth}{!}{\begin{tabular}{c|c|cc|cc|cc|cc|cc}
    \toprule
    \multicolumn{2}{c|}{\multirow{1}{*}{{Missing Ratio}}} & 
    \multicolumn{2}{c}{\rotatebox{0}{\scalebox{1.0}{10\%}}} &
    \multicolumn{2}{c}{\rotatebox{0}{\scalebox{1.0}{20\%}}} &
    \multicolumn{2}{c}{\rotatebox{0}{\scalebox{1.0}{40\%}}} &
    \multicolumn{2}{c}{\rotatebox{0}{\scalebox{1.0}{60\%}}} &
    \multicolumn{2}{c}{\rotatebox{0}{\scalebox{1.0}{80\%}}} \\
    \cmidrule(lr){3-4} \cmidrule(lr){5-6}\cmidrule(lr){7-8} \cmidrule(lr){9-10} \cmidrule(lr){11-12}  
    \multicolumn{2}{c|}{Metric}  & \scalebox{1.0}{R$^2$$\uparrow$} & \scalebox{1.0}{RSE$\downarrow$}  & \scalebox{1.0}{R$^2$$\uparrow$} & \scalebox{1.0}{RSE$\downarrow$}  & \scalebox{1.0}{R$^2$$\uparrow$} & \scalebox{1.0}{RSE$\downarrow$}  & \scalebox{1.0}{R$^2$$\uparrow$} & \scalebox{1.0}{RSE$\downarrow$} & \scalebox{1.0}{R$^2$$\uparrow$} & \scalebox{1.0}{RSE$\downarrow$}\\
    \toprule
    \multirow{3}{*}{\scalebox{1.0}{Transformer\_6}} &\scalebox{1.0}{iSpikformer} & \scalebox{1.0}{0.815} & {\scalebox{1.0}{0.479}} & {\scalebox{1.0}{0.813}} & {\scalebox{1.0}{0.486}} & {\scalebox{1.0}{0.809}} & {\scalebox{1.0}{0.488}} & {\scalebox{1.0}{0.804}} & {\scalebox{1.0}{0.492}} & {\scalebox{1.0}{0.802}} & {\scalebox{1.0}{0.496}}\\
    &\scalebox{1.0}{\textbf{TS-former}} & \textbf{\scalebox{1.0}{0.843}} & \textbf{\scalebox{1.0}{0.419}} & \textbf{\scalebox{1.0}{0.842}} & \textbf{\scalebox{1.0}{0.421}} & \textbf{\scalebox{1.0}{0.838}} & \textbf{\scalebox{1.0}{0.430}} & \textbf{\scalebox{1.0}{0.835}} & \textbf{\scalebox{1.0}{0.436}} & \textbf{\scalebox{1.0}{0.831}} & \textbf{\scalebox{1.0}{0.440}}\\
    \cmidrule(lr){2-12}
    &\scalebox{1.0}{Promotion} & \scalebox{1.0}{3.43\%} & 
    \scalebox{1.0}{14.3\%} & 
    \scalebox{1.0}{3.56\%} & 
    \scalebox{1.0}{15.4\%}  & 
    \scalebox{1.0}{3.58\%} & 
    \scalebox{1.0}{13.4\%} & 
    \scalebox{1.0}{3.86\%} & 
    \scalebox{1.0}{12.8\%} & 
    \scalebox{1.0}{3.61\%} & 
    \scalebox{1.0}{12.7\%}\\
    \midrule
    \multirow{3}{*}{\scalebox{1.0}{Transformer\_96}} &\scalebox{1.0}{iSpikformer} & 
    {\scalebox{1.0}{0.270}} & {\scalebox{1.0}{0.915}} & {\scalebox{1.0}{0.254}} & {\scalebox{1.0}{0.926}} & {\scalebox{1.0}{0.230}} & 
    {\scalebox{1.0}{0.935}} & 
    {\scalebox{1.0}{0.204}} & 
    {\scalebox{1.0}{0.948}} & 
    {\scalebox{1.0}{0.194}} & 
    {\scalebox{1.0}{0.959}}\\
    &\scalebox{1.0}{\textbf{TS-former}} & \textbf{\scalebox{1.0}{0.277}} & \textbf{\scalebox{1.0}{0.907}} & \textbf{\scalebox{1.0}{0.263}} & \textbf{\scalebox{1.0}{0.911}} & \textbf{\scalebox{1.0}{0.238}} & \textbf{\scalebox{1.0}{0.92 7}} & \textbf{\scalebox{1.0}{0.212}} & \textbf{\scalebox{1.0}{0.936}} & \textbf{\scalebox{1.0}{0.205}} & \textbf{\scalebox{1.0}{0.945}}\\
    \cmidrule(lr){2-12}
    &\scalebox{1.0}{Promotion} & \scalebox{1.0}{2.59\%} &
    \scalebox{1.0}{0.88\%} & 
    \scalebox{1.0}{3.54\%} & 
    \scalebox{1.0}{1.65\%}  & 
    \scalebox{1.0}{3.47\%} & 
    \scalebox{1.0}{0.86\%} & 
    \scalebox{1.0}{3.92\%} & 
    \scalebox{1.0}{1.28\%} & 
    \scalebox{1.0}{5.67\%} & 
    \scalebox{1.0}{1.48\%}\\
    \midrule
    \multirow{3}{*}{\scalebox{1.0}{GRU\_6}}&\scalebox{1.0}{Spike-GRU} & {\scalebox{1.0}{0.830}} & {\scalebox{1.0}{0.429}} &{\scalebox{1.0}{0.819}} & {\scalebox{1.0}{0.440}} &{\scalebox{1.0}{0.771}} & {\scalebox{1.0}{0.497}} & {\scalebox{1.0}{0.746}} & {\scalebox{1.0}{0.522}} & {\scalebox{1.0}{0.743}} & {\scalebox{1.0}{0.530}}\\
    &\scalebox{1.0}{\textbf{TS-GRU}} & \textbf{\scalebox{1.0}{0.843}} & \textbf{\scalebox{1.0}{0.417}} & \textbf{\scalebox{1.0}{0.839}} & \textbf{\scalebox{1.0}{0.414}} & \textbf{\scalebox{1.0}{0.834}} & \textbf{\scalebox{1.0}{0.425}} & \textbf{\scalebox{1.0}{0.823}} & \textbf{\scalebox{1.0}{0.435}} & \textbf{\scalebox{1.0}{0.792}} & \textbf{\scalebox{1.0}{0.473}}\\
    \cmidrule(lr){2-12}
    &\cellcolor{grey!15}\scalebox{1.0}{Promotion} & \cellcolor{grey!15}\scalebox{1.0}{1.50\%} & \cellcolor{grey!15}\scalebox{1.0}{2.40\%} & \cellcolor{grey!15}\scalebox{1.0}{2.50\%} & \cellcolor{grey!15}\scalebox{1.0}{5.90\%}  & \cellcolor{grey!15}\scalebox{1.0}{8.10\%} & \cellcolor{grey!15}\scalebox{1.0}{16.9\%} & \cellcolor{grey!15}\scalebox{1.0}{10.3\%} & \cellcolor{grey!15}\scalebox{1.0}{20.0\%} & \cellcolor{grey!15}\scalebox{1.0}{6.50\%} & \cellcolor{grey!15}\scalebox{1.0}{12.1\%}\\
    \midrule
    \multirow{3}{*}{\scalebox{1.0}{GRU\_96}} &\scalebox{1.0}{Spike-GRU} &{\scalebox{1.0}{0.243}} & {\scalebox{1.0}{0.924}} & {\scalebox{1.0}{0.240}} & {\scalebox{1.0}{0.919}} & {\scalebox{1.0}{0.213}} & {\scalebox{1.0}{0.932}} & {\scalebox{1.0}{0.191}} & {\scalebox{1.0}{0.944}} &{\scalebox{1.0}{0.171}} & {\scalebox{1.0}{0.970}}\\
    &\scalebox{1.0}{\textbf{TS-GRU}} & \textbf{\scalebox{1.0}{0.342}} & \textbf{\scalebox{1.0}{0.857}} & \textbf{\scalebox{1.0}{0.341}} & \textbf{\scalebox{1.0}{0.860}} & \textbf{\scalebox{1.0}{0.338}} & \textbf{\scalebox{1.0}{0.863}} & \textbf{\scalebox{1.0}{0.319}} & \textbf{\scalebox{1.0}{0.869}} & \textbf{\scalebox{1.0}{0.294}} & \textbf{\scalebox{1.0}{0.878}}\\
    \cmidrule(lr){2-12}
    &\cellcolor{grey!15}\scalebox{1.0}{Promotion} & \cellcolor{grey!15}\scalebox{1.0}{40.7\%} & \cellcolor{grey!15}\scalebox{1.0}{7.80\%} & \cellcolor{grey!15}\scalebox{1.0}{42.0\%} & \cellcolor{grey!15}\scalebox{1.0}{6.86\%}  & \cellcolor{grey!15}\scalebox{1.0}{58.7\%} & \cellcolor{grey!15}\scalebox{1.0}{7.99\%} & \cellcolor{grey!15}\scalebox{1.0}{67.0\%} & \cellcolor{grey!15}\scalebox{1.0}{8.63\%} & \cellcolor{grey!15}\scalebox{1.0}{71.9\%} & \cellcolor{grey!15}\scalebox{1.0}{10.4\%}\\
     \midrule
    \multirow{3}{*}{\scalebox{1.0}{TCN\_6}}&\scalebox{1.0}{Spike-TCN} & {\scalebox{1.0}{0.774}} & {\scalebox{1.0}{0.509}} &{\scalebox{1.0}{0.765}} & {\scalebox{1.0}{0.521}} &{\scalebox{1.0}{0.757}} & {\scalebox{1.0}{0.549}} & {\scalebox{1.0}{0.742}} & {\scalebox{1.0}{0.570}} & {\scalebox{1.0}{0.731}} & {\scalebox{1.0}{0.596}}\\
    &\scalebox{1.0}{\textbf{TS-TCN}} & \textbf{\scalebox{1.0}{0.792}} & \textbf{\scalebox{1.0}{0.469}} & \textbf{\scalebox{1.0}{0.781}} & \textbf{\scalebox{1.0}{0.493}} & \textbf{\scalebox{1.0}{0.773}} & \textbf{\scalebox{1.0}{0.512}} & \textbf{\scalebox{1.0}{0.756}} & \textbf{\scalebox{1.0}{0.538}} & \textbf{\scalebox{1.0}{0.744}} & \textbf{\scalebox{1.0}{0.562}}\\
    \cmidrule(lr){2-12}
    &\scalebox{1.0}{Promotion} & \scalebox{1.0}{2.33\%} & 
    \scalebox{1.0}{8.53\%} & \scalebox{1.0}{2.12\%} & \scalebox{1.0}{5.84\%}  & \scalebox{1.0}{2.11\%} & \scalebox{1.0}{7.23\%} & \scalebox{1.0}{1.89\%} & \scalebox{1.0}{5.94\%} & \scalebox{1.0}{1.78\%} & \scalebox{1.0}{6.05\%}\\ 
    \midrule
    \multirow{3}{*}{\scalebox{1.0}{TCN\_96}}&\scalebox{1.0}{Spike-TCN} & {\scalebox{1.0}{0.315}} & {\scalebox{1.0}{0.884}} &{\scalebox{1.0}{0.307}} & {\scalebox{1.0}{0.914}} &{\scalebox{1.0}{0.248}} & {\scalebox{1.0}{0.936}} & {\scalebox{1.0}{0.214}} & {\scalebox{1.0}{0.973}} & {\scalebox{1.0}{0.202}} & {\scalebox{1.0}{0.996}}\\
    &\scalebox{1.0}{\textbf{TS-TCN}} & \textbf{\scalebox{1.0}{0.323}} & \textbf{\scalebox{1.0}{0.870}} & \textbf{\scalebox{1.0}{0.319}} & \textbf{\scalebox{1.0}{0.883}} & \textbf{\scalebox{1.0}{0.276}} & \textbf{\scalebox{1.0}{0.914}} & \textbf{\scalebox{1.0}{0.256}} & \textbf{\scalebox{1.0}{0.925}} & \textbf{\scalebox{1.0}{0.228}} & \textbf{\scalebox{1.0}{0.970}}\\
    \cmidrule(lr){2-12}
    &\scalebox{1.0}{Promotion} & \scalebox{1.0}{2.54\%} & 
    \scalebox{1.0}{1.61\%} & \scalebox{1.0}{3.91\%} & \scalebox{1.0}{3.52\%}  & \scalebox{1.0}{11.3\%} & \scalebox{1.0}{2.41\%} & \scalebox{1.0}{19.6\%} & \scalebox{1.0}{5.19\%} & \scalebox{1.0}{12.9\%} & \scalebox{1.0}{2.68\%}\\ 
    \bottomrule
  \end{tabular}
  }
    \end{small}
    \label{new-robustness}
  \end{threeparttable}
\end{table}

\subsection{Standard Deviation Analysis}

Table~\ref{LIF-seed} shows the standard deviation of $R^2$ and RSE metrics over three runs with different random seeds for TS-GRU and TS-former on the Metr-la and Electricity datasets, across prediction lengths of 6, 24, 48, and 96.
Both models exhibit low standard deviations, demonstrating their stability and robustness. 
These results confirm the reliability of TS-GRU and TS-former under varying random seeds.

\begin{table*}[!t]
\centering
\caption{The standard deviation of 3 runs with different random seeds with our TS-former and TS-GRU on Metr-la and Electricity datasets.}
\resizebox{\linewidth}{!}{ 
    \begin{tabular}{@{}|ccccccccc|cccccccc@{}} 
    \toprule
    \multicolumn{1}{c|}{Datasets} & \multicolumn{8}{c|}{Metr-la} & \multicolumn{8}{c}{Electricity} \\ 
    \cmidrule(lr){1-1} \cmidrule(lr){2-9} \cmidrule(lr){10-17}
    \multicolumn{1}{l|}{Lengths} & \multicolumn{2}{c}{6} & \multicolumn{2}{c}{24} & \multicolumn{2}{c}{48} & \multicolumn{2}{c|}{96} & \multicolumn{2}{c}{6} & \multicolumn{2}{c}{24} & \multicolumn{2}{c}{48} & \multicolumn{2}{c}{96}\\ 
    \cmidrule(lr){1-1} \cmidrule(lr){2-3} \cmidrule(lr){4-5} \cmidrule(lr){6-7} \cmidrule(lr){8-9} \cmidrule(lr){10-11} \cmidrule(lr){12-13} \cmidrule(lr){14-15} \cmidrule(lr){16-17}
    \multicolumn{1}{l|}{Metrics} &  \scalebox{1.0}{R$^2$$\uparrow$} &  \scalebox{1.0}{RSE$\downarrow$} &  \scalebox{1.0}{R$^2$$\uparrow$} &  \scalebox{1.0}{RSE$\downarrow$} &  \scalebox{1.0}{R$^2$$\uparrow$} &  \scalebox{1.0}{RSE$\downarrow$} &  \scalebox{1.0}{R$^2$$\uparrow$} &  \scalebox{1.0}{RSE$\downarrow$} &  \scalebox{1.0}{R$^2$$\uparrow$} &  \scalebox{1.0}{RSE$\downarrow$} &  \scalebox{1.0}{R$^2$$\uparrow$} &  \scalebox{1.0}{RSE$\downarrow$} &  \scalebox{1.0}{R$^2$$\uparrow$} &  \scalebox{1.0}{RSE$\downarrow$} &  \scalebox{1.0}{R$^2$$\uparrow$} &  \scalebox{1.0}{RSE$\downarrow$}\\
    \midrule
    \multicolumn{1}{c|}{TS-GRU (ours)} & .002 & .005 & .004 & .006 & .002 & .011 & .004 & .009 & .001 &  .002 & .001 & .005 & .002 & .004 &  .001 &  .008   \\
    \multicolumn{1}{c|}{TS-former (ours)} & .001 & .008 & .002 & .006 & .005 & .013 & .002 & .010 & .001 & .003 & .001 & .003 & .001 & .005 & .001 & .007 \\
    \bottomrule
    \end{tabular}
}
\label{LIF-seed}
\end{table*}

\subsection{Average Power Spectrum Analysis}

To gain a deeper understanding of how TS-LIF processes temporal features, we analyze the average power spectrum of dendritic and somatic voltages after the first encoder layer of TS-former. Figure~\ref{fig:power_spectrum} illustrates the power distribution of voltage signals from dendrites and soma across different frequency ranges.

The analysis reveals distinct frequency response characteristics between dendritic and somatic compartments. These different roles allow TS-LIF to encode diverse temporal features effectively, contributing to its superior performance on time series forecasting tasks.

\begin{figure}[h!]
    \centering
    \includegraphics[width=1\textwidth]{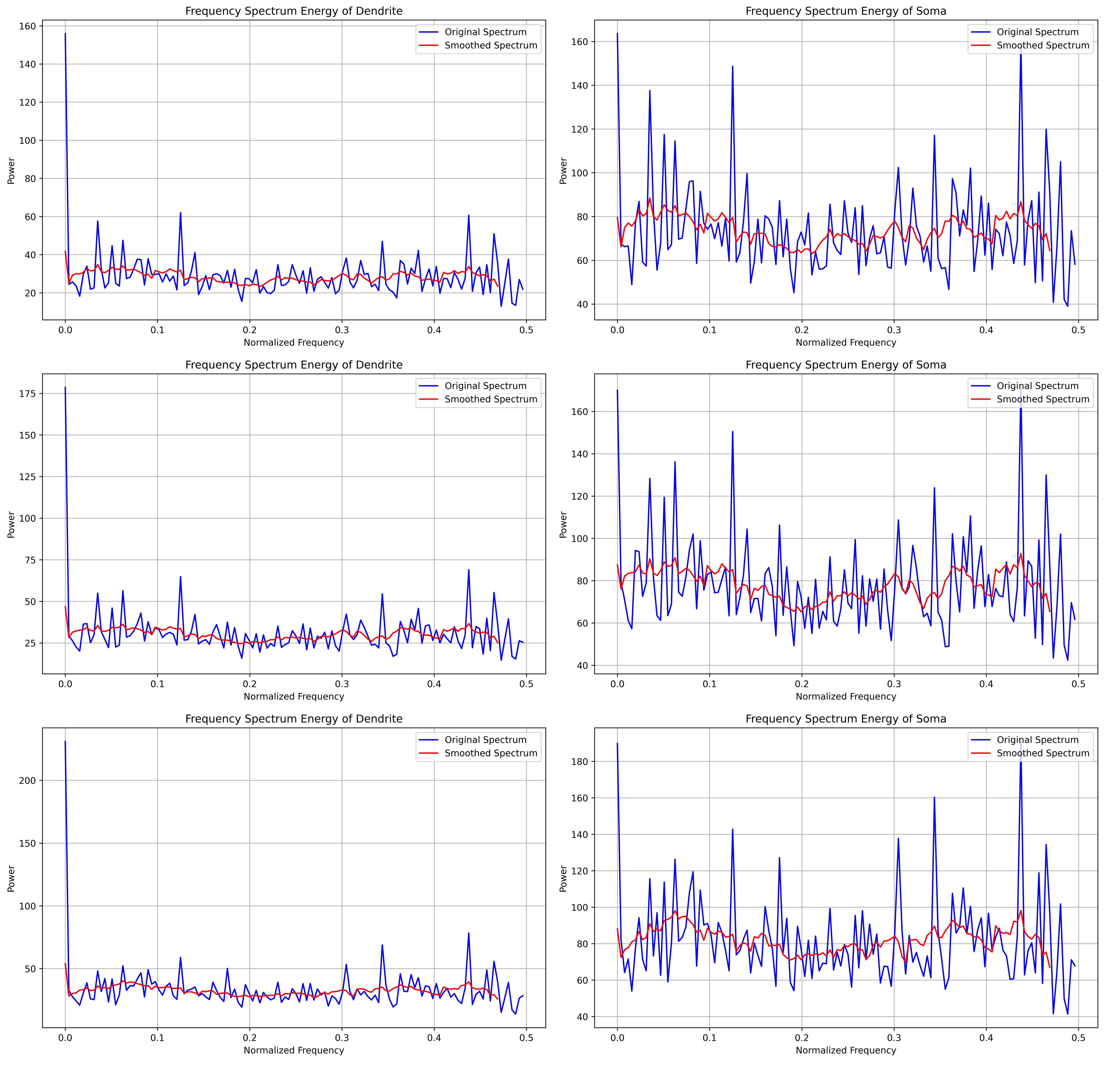}
    \caption{Average power spectrum analysis of dendritic and somatic voltages. The figure illustrates the power distribution of voltage signals from dendrites and soma across different frequency ranges, providing insights into neural signal processing mechanisms.}
    \label{fig:power_spectrum}
\end{figure}

\subsection{Evaluating Long-Term Dependency Capture with Delayed Spiking XOR Problem}

To further evaluate the ability of spiking neuron models to capture long-term dependencies, we conducted experiments using the delayed spiking XOR problem. This task tests the model's capacity to retain and process information over extended periods. The task involves three stages: an initial spike input, a delay period with noisy spikes, and a second spike input. The model computes an XOR operation between the first and final inputs based on their firing rates.

Our experimental setup follows the parameters described in \citep{zheng2024-temporal}. Specifically, we employed a two-layer MLP with only 20 hidden neurons, where the input feature size was set to 20. The results, measuring prediction accuracy under different delay timesteps and activation functions (ReLU, LIF, and TS-LIF), are summarized in Table~\ref{tab:delayed_xor}.

\begin{table}[h!]
    \centering
    \caption{Prediction accuracy of the delayed spiking XOR problem under different delay timesteps and activation functions.}
    \begin{tabular}{cccc}
    \toprule
    Delay timesteps & ReLU & LIF & TS-LIF \\ 
    \midrule
    10 & 0.5 & 0.748 & 0.994 \\ 
    20 & 0.5 & 0.504 & 0.977 \\ 
    30 & 0.5 & 0.504 & 0.791 \\ 
    40 & 0.5 & 0.500 & 0.585 \\ 
    50 & 0.5 & 0.500 & 0.585 \\ 
    \bottomrule
    \end{tabular}
    \label{tab:delayed_xor}
\end{table}

The results demonstrate that TS-LIF significantly outperforms both ReLU and standard LIF across all tested delay timesteps, particularly excelling at shorter and moderate delays. While the performance of LIF degrades as delay increases, TS-LIF retains higher accuracy, showcasing its enhanced capability for capturing long-term dependencies. This improvement can be attributed to the distinct processing mechanisms of dendritic and somatic compartments, which allow TS-LIF to maintain a more robust temporal memory compared to traditional spiking and non-spiking activation functions.

\begin{figure*}[h!]
  \centering
  \includegraphics[width=0.9\linewidth]{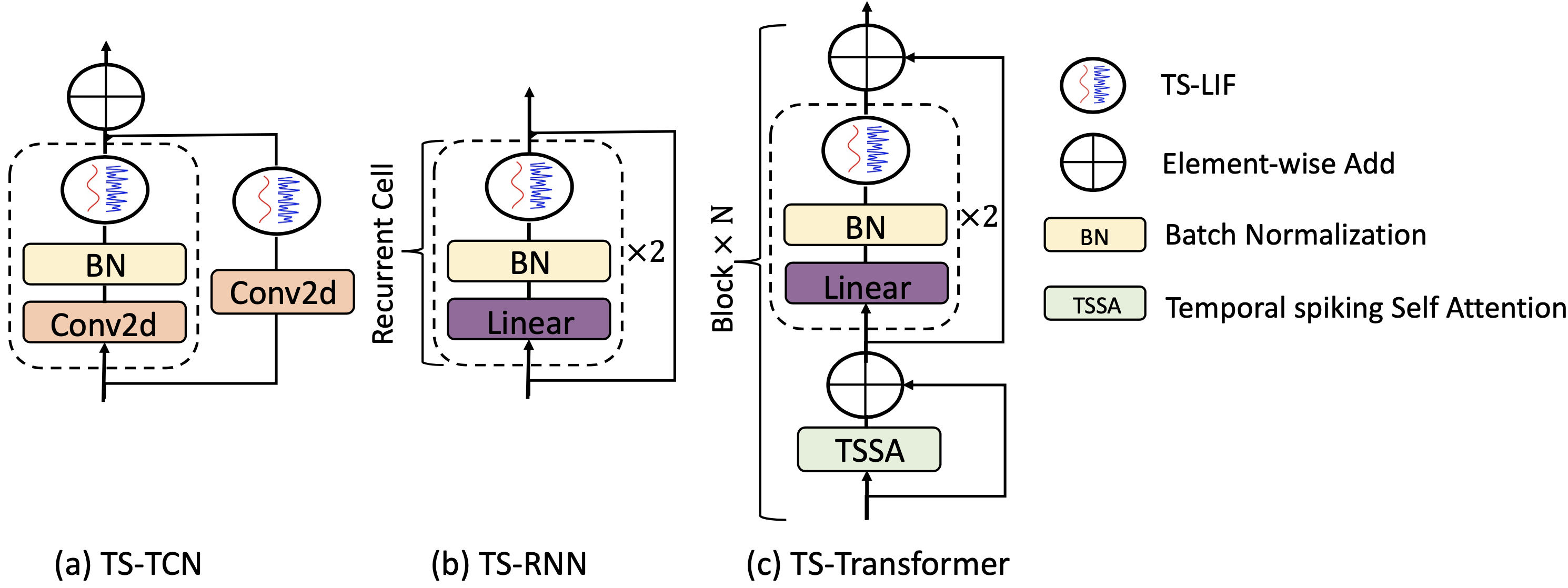}
  \caption{An illustration of our proposed TS-LIF based spiking structure in three models (TCN, RNN and Transformer).}
  \label{ts-lif}
\end{figure*}

\subsection{Temporal Alignment and Spike Encoder}\label{sec:spike encoder}
To utilize the intrinsic nature of SNN to its best, it's crucial to align the temporal dimension between time-series data and SNNs.
Our central concept is to incorporate relevant finer information of the spikes within the time-series data at each time step.
Specifically, we divide a time step $\Delta T$ of the time series into $T_s$ segments and each of them allows a firing event for neurons whose membrane potentials surpass the threshold, i.e., $\Delta T = T_s \Delta t$.

This equation bridges between a time-series time step $\Delta T$ and an SNN time step $\Delta t$.
As a result, the independent variable $t$ in time-series ($\mathcal{X}(t)$) and in SNN ($U(t), I(t), H(t), S(t)$) are now sharing the same meaning.
To this end, the spiking encoder, responsible for generating the first spike trains based on the floating-point inputs, needs to calculate $T_s \times T \times C$ possible spike events.
The most straightforward non-parametric approach is to consider each data point in the input time series as the current value and replicate it $T_s$ times.
However, this approach can disrupt the continuous nature of the underlying $\mathcal{X}(t)$ hypothesis.
Therefore, we seek to use parametric spike encoding techniques.

Given the historical observed time-series $\mathbf{X}\in\mathbb{R}^{T\times C}$, we feed it into a convolutional layer followed by batch normalization and generate the spikes as:
\begin{equation}
\mathbf{S} = \mathcal{SNN}\left(\operatorname{BN}\left(\operatorname{Conv}\left(\mathbf{X}\right)\right)\right).
\end{equation}
Where $\mathcal{SNN}$ is the TS-LIF neuron, by passing through the convolutional layer, the dimension of the spike train $\mathbf{S}$ is expanded to ${T_{s}\times T\times C}$. Spikes at every SNN time step are generated by pairing the data with different convolutional kernels.

The convolutional spike encoder capture internal temporal information of the input data, i.e., temporal changes and shapes, respectively, contributing to the representation of the dynamic nature of the information over time and catering to the following spiking layers for event-driven modeling. Also the pipeline of spiking based TCN, RNN-based models and Transformer strcuture are show in the

\subsection{Limitations and Future Work}

\noindent\textbf{Limitations}. In multivariate time series forecasting, modeling the correlations between variables is crucial for improving prediction accuracy. Current SNN-based models for time series forecasting primarily focus on temporal modeling and lack explicit mechanisms for capturing inter-variable correlations. For instance, explicitly computing cross-variable correlations, as shown in works like \citet{ilbert2024-samformer} and \citet{zhang2023-crossformer}, can effectively model multivariate relationships. We intend to explore how SNN filtering mechanisms can efficiently model cross-variable relationships to further enhance predictive performance.

\noindent\textbf{Future Work}. Future research directions include: (1) designing an efficient and effective SNN mechanism for capturing cross-variable correlations in multivariate time series, and (2) developing more generalized SNN structures for comprehensive time series analysis tasks, such as anomaly detection, time series generation, and classification.

\end{document}